\documentclass[11pt,letterpaper]{article}
\usepackage[letterpaper, total={6.5in, 9in}]{geometry}
\usepackage{amsmath}
\usepackage{amsfonts}
\usepackage{color}
\usepackage{latexsym}
\usepackage{tablefootnote}
\usepackage{epsfig}
\usepackage{multirow}
\usepackage{float}
\usepackage{array}
\usepackage{amssymb,amsthm}

\newcommand*{\QEDA}{\hfill\hbox{\vrule width1.0ex height1.0ex}}
\usepackage{algorithm}
\usepackage{algorithmic}
\usepackage{hyperref}

%~~~~~~~~~~~~~~~ Tables ~~~~~~~~~~~~~~~~~~~~~
\usepackage{makecell,booktabs}
\usepackage[version=4]{mhchem}
\usepackage[strict]{changepage}

\usepackage{soul}

%\include{defsim}
% Set left margin - The default is 1 inch, so the following
% command sets a 1.25-inch left margin.
%\setlength{\oddsidemargin}{0.25in}
%\oddsidemargin .15cm
%\evensidemargin .15cm
% \oddsidemargin .0cm
% \evensidemargin .0cm

% % Set width of the text - What is left will be the right margin.
% % In this case, right margin is 8.5in - 1.25in - 6in = 1.25in.
% \setlength{\textwidth}{6.5in}
% %\textwidth 16.5cm

% % Set top margin - The default is 1 inch, so the following
% % command sets a 0.75-inch top margin.
% %\setlength{\topmargin}{-0.25in}
% \topmargin .5cm

% % Set height of the header
% \setlength{\headheight}{.5cm}

% % Set vertical distance between the header and the text
% \setlength{\headsep}{0.5cm}

% % Set height of the text
% \setlength{\textheight}{22.5cm}

% % Set vertical distance between the text and the
% % bottom of footer
% \setlength{\footskip}{0.4in}

\usepackage{amsmath}
\usepackage{amsfonts}
\usepackage{latexsym}
\usepackage{amssymb}

\newtheorem{thm}{Theorem}[section]
\newtheorem{theorem}[thm]{Theorem}

\newtheorem{lemma}[thm]{Lemma}

\newtheorem{proposition}[thm]{Proposition}

\newtheorem{definition}[thm]{Definition}

\newtheorem{remark}[thm]{Remark}

\newcommand{\beq}{\begin{equation}}
\newcommand{\eeq}{\end{equation}}
\newcommand{\beqa}{\begin{eqnarray}}
\newcommand{\eeqa}{\end{eqnarray}}
\newcommand{\beqas}{\begin{eqnarray*}}
\newcommand{\eeqas}{\end{eqnarray*}}
\newcommand{\bi}{\begin{itemize}}
\newcommand{\ei}{\end{itemize}}

\newcommand{\vgap}{\vspace{.1in}}
\newcommand{\nn}{\nonumber}

\setcounter{page}{1}

\newcommand{\R}{\mathbb{R}}

\newcommand{\lam}{{\lambda}}

\newcommand{\inner}[2]{\langle #1,#2\rangle}

\newcommand{\argmin}{\mathrm{argmin}\,}

\newcommand{\Argmin}{\mathrm{Argmin}\,}

\newcommand{\tx}{\tilde x}

\begin{document}
	\title{A Proximal Algorithm for Sampling from Non-smooth Potentials}
	\date{Feburary 9, 2021}
	\author{
		Jiaming Liang \thanks{School of Industrial and Systems
			Engineering, Georgia Institute of
			Technology, Atlanta, GA, 30332.
			(email: {\tt jiaming.liang@gatech.edu}).}
		\qquad
		Yongxin Chen \thanks{School of Aerospace Engineering, Georgia Institute of
			Technology, Atlanta, GA, 30332. (email: {\tt yongchen@gatech.edu}).}
	}
	\maketitle
	
	\begin{abstract}
% 	MCMC is an effective and dominant method to sample from high dimensional complex distributions. Yet, most existing MCMC methods are only applicable to settings with smooth potentials (log-densities). In this work, we examine sampling problems with non-smooth potentials. We propose a novel MCMC algorithm for sampling from non-smooth potentials. We provide a nonasymptotical analysis of our algorithm and establish a polynomial-time complexity ??, better than all existing results. Our method is based on the Alternating Marginal Sampling framework developed in \cite{lee2020structured}. This framework requires the so called restricted Gaussian oracle, which can be viewed as a sampling counterpart of the proximal map in optimization. One key contribution of this work is a fast algorithm that realizes this proximal sampling step for any convex non-smooth potential with bounded Lipschitz constant.
% 	\\
	
	In this work, we examine sampling problems with non-smooth potentials. We propose a novel Markov chain Monte Carlo algorithm for sampling from non-smooth potentials. We provide a non-asymptotical analysis of our algorithm and establish a polynomial-time complexity $\tilde {\cal O}(d\varepsilon^{-1})$ to obtain $\varepsilon$ total variation distance to the target density, better than most existing results under the same assumptions. Our method is based on the proximal bundle method and an alternating sampling framework. This framework requires the so-called restricted Gaussian oracle, which can be viewed as a sampling counterpart of the proximal mapping in convex optimization. One key contribution of this work is a fast algorithm that realizes the restricted Gaussian oracle for any convex non-smooth potential with bounded Lipschitz constant.\\

	{\bf Key words.} High-dimensional sampling, non-smooth potential, complexity analysis, alternating sampling framework, rejection sampling, proximal bundle method, restricted Gaussian oracle

	\end{abstract}
	
\section{Introduction}\label{sec:intro}
% \red{the alternating framework can be viewed as the proximal point framework, many more possibilities, e.g., ADMM is an instance of the proximal point framework in optimization, designing a counterpart in sampling may be useful to do distributed sampling}

% 1. Introduce Sampling.\\
% 2. Most existing work are for smooth potentials.\\
% 3. Our goal: sampling without smoothness. Discuss the challenge. also mention opportunity: there are many non-smooth optimization methods\\
% 4. Related work, comparison. Composite sampling $f+g$, proximal methods, RGO, smoothing, Durmus\\
% 5. Advantage of our algorithm: i) no RGO or other oracles; ii) Linear convergence (logarithmic dependence on precision) when $\mu$ is not zero; iii) better dependence (state-of-art) on $M$ and $d$

Core to many scientific and engineering problems that face uncertainty (either physically or algorithmically) is the task of drawing samples from a given, often unnormalized, probability density. Sampling plays a crucial role in many applications such as statistical inference/estimation, operations research, physics, biology, and machine learning, etc \cite{bertsimas2004solving,durmus2018efficient,dyer1991random,gelman2013bayesian,kalai2006simulated,kannan1997random,krauth2006statistical,sites2003delimiting}. For instance, in Bayesian statistics, we can sample from the posterior distribution to infer its mean, covariance, or other important statistics. Sampling is heavily used in molecular dynamics to discover new structures. Sampling is also closely related to optimization. On the one hand, optimization can be viewed as the limit of sampling when the temperature parameter goes to 0. On the other hand, sampling can be viewed as an optimization over the manifold of probability distributions \cite{wibisono2018sampling,yang2020variational}.

Over the years, many methods and algorithms have been developed for sampling \cite{applegate1991sampling,chen2020fast,dyer1991random,lee2021structured,lee2017eldan,lovasz2006fast,NEURIPS2019_eb86d510}. A very popular framework for sampling from high dimensional complex distributions is the Markov chain Monte Carlo (MCMC) algorithm \cite{chen2018fast,cheng2018convergence,cheng2018underdamped,durmus2019analysis,durmus2018efficient}. In MCMC, a Markov chain is constructed so that its invariant distribution is the given target distribution we want to sample from. After running the Markov chain for sufficiently many iterations, the state will follow this invariant distribution, generating samples from it. Several widely used MCMC methods include Langevin Monte Carlo (LMC) \cite{dalalyan2017theoretical,grenander1994representations,parisi1981correlation,roberts1996exponential}, Metropolis-adjusted Langevin algorithm (MALA) \cite{bou2013nonasymptotic,roberts2002langevin,roberts1996exponential}, and Hamiltonian Monte Carlo (HMC) \cite{neal2011mcmc}. These three algorithms use gradient information of the potential (log-density) to construct the Markov chain. They resemble the gradient-based algorithms in optimization and can be viewed as the sampling counterparts of them. Over the last few years, many theoretical results (see \cite{chen2020fast,dalalyan2017theoretical,durmus2019analysis,dwivedi2018log,lee2021structured,lee2017eldan,roberts1998optimal,roberts1996exponential} and references therein) have been established to understand the computational complexities of these MCMC algorithms. 

Most existing gradient-based MCMC methods are only applicable to settings with smooth potentials \cite{dalalyan2017theoretical,lee2021structured,wibisono2018sampling} whose gradient is Lipschitz continuous. However, non-smooth sampling is also an important problem as many applications of sampling involve non-smooth potentials. For instance, in Bayesian inference, the prior is naturally non-smooth when a compact support is considered. Many problems in deep learning are also non-smooth, not only due to non-smooth activation functions like ReLU used in the neural networks, but also due to intrinsic scaling symmetries. Nevertheless, the study of sampling without smoothness is nascent. This is in sharp contrast to optimization where a plethora of algorithms, e.g., subgradient method, proximal algorithm, bundle method have been developed for non-smooth optimization \cite{lemarechal1975extension,lemarechal1978nonsmooth,liang2020proximal,liang2021unified,mifflin1982modification,rockafellar1976monotone,wolfe1975method}.

{\bf Our contributions.} The goal of this work is to establish an efficient algorithm to draw samples from a distribution with non-smooth potential. We focus on the case where the potential is convex and is Lipschitz continuous. Our algorithm is based on the recent alternative sampling framework (ASF) \cite{lee2021structured}, which can be viewed as a sampling counterpart of the proximal point method in optimization \cite{rockafellar1976monotone}. The key of the alternative sampling framework is a step known as the restricted Gaussian oracle (RGO) (see Definition \ref{def:RGO}) to draw samples from a potential regularized by a large isotropic quadratic term. To utilize this framework to sample from general non-smooth potentials with bounded Lipschitz constants, we develop an efficient realization of the RGO through rejection sampling with a properly designed proposal. A non-smooth optimization technique known as the proximal bundle method \cite{lemarechal1975extension,liang2020proximal,liang2021unified} is used to compute the proposal. We establish a polynomial-time complexity $\tilde {\cal O}(d \varepsilon^{-1})$ to obtain $\varepsilon$ total variation distance to the target density, better than most existing results under the same assumptions.

A key contribution of this paper
% which may be of independent interest, 
is a fast algorithm for implementing RGO for any convex (either smooth or non-smooth) function.
When the potential $g$ is decomposable, e.g., $g$ is an $\ell_1$ norm or an indicator function of an orthant, there exists simple sampling algorithms for RGO \cite{mou2019efficient}.
In general, the implementation of RGO is a difficult algorithmic task, which makes ASF \cite{lee2021structured} a conceptual method without implementable algorithms in some cases.
Our algorithm of implementing the RGO for any convex function broads the applicability of the ASF significantly. In fact, our algorithm for RGO can be used for any framework, not only ASF, that requires sampling from $\exp(-g(x)-\frac{1}{2\eta}\|x-y\|^2)$ for any $y$ and some proper $\eta>0$. From an optimization point of view, our algorithm for RGO provides an efficient realization of the proximal oracle for a wide range of functions/potentials, solidifying the connections between the ongoing research at the interface of optimization and sampling \cite{wibisono2018sampling}.
% Indeed, proximal algorithms based on alternating steps similar to ASF for sampling have been studied in \cite{bernton2018langevin,wibisono2018sampling,wibisono2019proximal}, however, these algorithms are not instances of ASF. This is because they approximate RGO by either sampling from a surrogate density with a linearized potential or replacing sampling by optimization, and then compensate the loss of randomness in approximation by taking an additional Brownian motion step.
% In contrast, we do not make any approximation in implementing the RGO and provide fast algorithms for RGO with either smooth or non-smooth potentials.

{\bf Related Work.} Over the last few years, several new algorithms and theoretical results in sampling with non-smooth potentials have been established. In \cite{mou2019efficient}, sampling for non-smooth composite potentials is considered. The algorithm needs the proximal sampling oracle that samples from the target potential regularized by a large isotropic quadratic term as well as computes the corresponding partition function, which is not realistic for general potentials. In \cite{shen2020composite}, algorithms to sample from non-smooth composite potentials are developed; both are based on the RGO which is similar to the proximal sampling oracle but do not need to compute the partition function. In \cite{chatterji2020langevin}, the authors developed an algorithm to sample from non-smooth potentials by running LMC on the Gaussian smoothing of the potentials. In \cite{lehec2021langevin}, the author developed the projected LMC algorithm and analyzed its complexity for non-smooth potentials. In \cite{freund2021convergence}, the authors developed a new analysis that leads to dimension-free complexity for sampling from a composite density which contains a non-smooth component. In \cite{durmus2019analysis}, the authors presented an optimization approach to analyze the complexity of sampling and established a complexity result for sampling with non-smooth composite potentials. In \cite{lee2017eldan}, the authors studied the complexity of the ball walk to sample from an isotropic logconcave density from a warm start. In \cite{bernton2018langevin,wibisono2019proximal}, a proximal algorithm was proposed. This algorithm resembles the ASF for sampling with a major difference that the RGO is replaced by proximal point optimization step, which introduces bias for sampling. 

To compare our results with \cite{bernton2018langevin} and \cite{freund2021convergence}, consider sampling from $\exp(-f(x)-\mu\|x\|^2/2)$ where $f$ is convex and $M$-Lipschitz continuous. Our complexity (see Theorem \ref{thm:g}) is $\tilde {\cal O}(M^2 d/\mu)$, better than ${\cal O}(M^2/(\mu \varepsilon^2))$ in \cite{freund2021convergence} and $\tilde {\cal O}(M^2 d/(\mu\epsilon^4))$ (albeit in Wasserstein distance) \cite{bernton2018langevin} when $\varepsilon<d^{-1/2}$.
For sampling from non-smooth potentials, compared with \cite{mou2019efficient,shen2020composite}, our algorithm does not require any sampling oracle.
Compared with \cite{lehec2021langevin}, we consider sampling from a distribution supported on $\R^d$ instead of a convex compact set.
Compared with \cite{chatterji2020langevin,durmus2019analysis}, our algorithm has better complexity in terms of total variation when the target error $\varepsilon$ is small.
Compared with \cite{lee2017eldan}, we consider a generic setting where the target distribution can be anisotropic, and our complexity is in general better in the low resolution region.
See Table \ref{tab:t1} for the detailed complexity bounds. Note that complexity results obtained in \cite{chatterji2020langevin,lee2017eldan} and this paper are for the last iterate, while the bound established in \cite{durmus2019analysis} (also \cite{freund2021convergence}) is for the average of all iterates.
\begin{table}[H]
	\begin{centering}
		\begin{tabular}{|>{\centering}p{2cm}|>{\centering}p{3.2cm}|>{\centering}p{3cm}|>{\centering}p{3cm}|>{\centering}p{3cm}|}
			\hline 
			{Paper} & {\cite{chatterji2020langevin}}  & {\cite{durmus2019analysis}} & {\cite{lee2017eldan}} & {this paper} \tabularnewline
			\hline 
			{Complexity} & {$\tilde {\cal O} (M^6 d^5 {\cal M}_4^{3/2} \varepsilon^{-10})$} & {$ {\cal O} (M^2 W_2^2 \varepsilon^{-4})$} & {$ {\cal O} (d^{5/2} \log(\beta/\varepsilon) )$} & {$\tilde {\cal O} (M^2 d {\cal M}_4^{1/2} \varepsilon^{-1})$} \tabularnewline
			\hline
		\end{tabular}
		\par\end{centering}
	\caption{Complexity bounds for sampling from non-smooth densities.}\label{tab:t1}
\end{table}
In Table \ref{tab:t1}, ${\cal M}_4$ denotes the finite fourth moment of the target distribution, $W_2$ denotes the Wasserstein distance between the initial and target distributions, and $\beta$ denotes the warmness of the initial distribution. More specifically, ${\cal M}_4 \approx d^2$ in the isotropic case, $\log\beta\approx d$ if the initial distribution is not warm started, and $W_2\approx \sqrt{d}$ in general. Under these simplifications, our bound is $\tilde {\cal O} (M^2 d^2\varepsilon^{-1})$, better than $\tilde {\cal O} (M^6 d^8\varepsilon^{-10})$ in \cite{chatterji2020langevin}, $ {\cal O} (M^2 d \varepsilon^{-4})$ in \cite{durmus2019analysis}, and $\tilde {\cal O} (d^{7/2})$ in \cite{lee2017eldan} when $d$ is large and $\epsilon$ is relatively small. Note that, for typical problems, $M = {\cal O}(d^{1/2})$, but sparsity maybe exploited to improve this dependence.

{\bf Organization.} The rest of this paper is structured as follows. In Section \ref{sec:formulation} we provide the problem formulation we are interested in. We also briefly review ASF on which our algorithm is based. In Section \ref{sec:RGO} we present our key contribution, an efficient realization of the RGO for general convex potentials with bounded Lipschitz constants. This is then combined with the alternating sampling framework in Section \ref{sec:main} to establish our results for sampling without smoothness. 
In addition, we further apply the proposed algorithm to sample from smooth densities in Section \ref{sec:smooth}, and establish iteration-complexity bounds.
Finally, we present some concluding remarks and possible extensions in Section \ref{sec:conclusion}.

\section{Problem formulation and alternating sampling framework}
\label{sec:formulation}
% 1. Non-smooth sampling problem formulation, assumptions.\\
% 2. Define $g=f+\mu\|\cdot-x^0\|^2/2$, high level strategy.\\
% 3. Alternate sampling, Restricted Gaussian oracle.\\

The problem of interest is to sample from a distribution on $\R^d$ proportional to $\exp(-f(x))$ where the potential $f$ is {\bf convex and $M$-Lipschitz continuous}. Note that the potential $f$ does not need to be smooth. This violates the smoothness assumption for most existing gradient-based MCMC sampling methods \cite{lee2021structured,wibisono2018sampling}.
Before describing our approach to design an efficient algorithm for sampling with non-smooth potentials, we introduce two important algorithmic notions used in this paper.

Our method is built on the alternating sampling framework (ASF) introduced in \cite{lee2021structured} (a similar method was developed in \cite{vono2022efficient}), which is a generic framework for sampling from a distribution $\exp(-g(x))$; ASF is itself a special case of Gibbs sampling.
% where $g$ is a $\mu$-strongly convex function.
For a given point $x\in \R^d$ and stepsize $\eta>0$, the alternating sampling framework repeats the two steps as in Algorithm \ref{alg:ASF}.

\begin{algorithm}[H]
	\caption{Alternating Sampling Framework \cite{lee2021structured}}
	\label{alg:ASF}
	\begin{algorithmic}
		\STATE 0. sample $y\sim \pi_x(y) \propto \exp(-\frac{1}{2\eta}\|x-y\|^2)$
		\STATE 1. sample $x\sim \pi_y(x) \propto \exp(-g(x)-\frac{1}{2\eta}\|x-y\|^2)$
	\end{algorithmic}
\end{algorithm}

In Algorithm \ref{alg:ASF}, sampling $y$ given $x$ in step 1 can be easily done since $\pi_x(y) = {\cal N}(x,\eta I)$.
Sampling $x$ given $y$ in step 2 corresponds to the so-called restricted Gaussian oracle for $g$ introduced in \cite{lee2021structured}, which is the second crucial algorithmic notion used in this paper.
\begin{definition}\label{def:RGO}
	Given a point $y\in \R^d$ and stepsize $\eta >0$, a restricted Gaussian oracle (RGO) for convex $g:\R^d\to \R$ is a sampling oracle that returns a sample from a distribution proportional to $\exp(-g(\cdot) - \|\cdot-y\|^2/(2\eta))$.
\end{definition}

RGO is an analogy of the proximal mapping in convex optimization, which is heavily used in proximal point methods.
RGO is a key algorithmic ingredient used in \cite{lee2021structured} together with the alternating sampling framework to improve the iteration-complexity bounds for various sampling algorithms.
Examples of a convex function $g$ that admits an computationally efficient RGO have been presented in \cite{mou2019efficient,shen2020composite}, including coordinate-separable regularizers, $\ell_1$-norm, and group Lasso. 
% {\chen It would be helpful for the reader if we state explicitly the complexity of the outer iterations as a theorem.}

We recall the main result of \cite{lee2020structured,lee2021structured}, which gives the complexity of Algorithm \ref{alg:ASF} in terms of number of calls to RGO and is useful in this paper.

\begin{theorem}\label{thm:outer}
	(Theorem 1 of \cite{lee2020structured})
	Let $\pi$ be a distribution on $\R^d$ with $\pi(x) \propto \exp \left(-f_{\text {oracle }}(x)\right)$ such that $f_{\text {oracle }}$ is $\mu$-strongly convex, and let $\varepsilon \in(0,1)$. Let $\eta\le 1/\mu, T=\Theta\left(\frac{1}{\eta \mu} \log \frac{d}{\eta \mu \varepsilon}\right)$. Algorithm \ref{alg:ASF}, initialized at the minimizer\footnotemark \footnotetext{Actually, this minimizer can be an approximate solution, as long as $\|x-x_\text{opt}\|^2\le d/\mu$ where $x$ and $x_\text{opt}$ are the approximate and exact solutions, respectively.} of $f_{\text{oracle}}$, runs in $T$ iterations, each querying RGO for $f_{\text {oracle}}$ with parameter $\eta$ a constant number of times, and obtains $\varepsilon$ total variation distance to $\pi$.
\end{theorem}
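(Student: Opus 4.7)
The plan is to view Algorithm \ref{alg:ASF} as a two-block Gibbs sampler on the joint density $\pi(x,y)\propto \exp\left(-f_{\text{oracle}}(x)-\frac{1}{2\eta}\|x-y\|^2\right)$, whose $x$-marginal is exactly the target $\pi$. The central quantitative goal is then to show that this Gibbs sampler contracts an appropriate divergence (KL or $\chi^2$) to stationarity at a per-iteration rate of roughly $1-\eta\mu$, so that $T = \Theta\left(\tfrac{1}{\eta\mu}\log\tfrac{d}{\eta\mu\varepsilon}\right)$ iterations are sufficient. Combining this with Pinsker's inequality converts a KL bound of order $\varepsilon^2$ into the desired total variation guarantee.

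First I would carry out the per-step contraction. The $y|x$ step is a pure Gaussian convolution, so its transition kernel is $\mathcal N(x,\eta I)$ and has a log-Sobolev constant of $1/\eta$. The $x|y$ step, i.e.\ the RGO target $\exp(-f_{\text{oracle}}(x)-\tfrac{1}{2\eta}\|x-y\|^2)$, is the density of a distribution whose potential is $(\mu+1/\eta)$-strongly convex; by Bakry--Émery this conditional density satisfies a log-Sobolev inequality with constant $\mu+1/\eta$. Following the simultaneous-heat-flow/entropy-contraction argument used in \cite{lee2021structured,lee2020structured}, one shows that one pair of sweeps of the Gibbs sampler contracts $\text{KL}(\rho_t\|\pi)$ by a factor of at least $\frac{\eta\mu}{1+\eta\mu}$, equivalently $\text{KL}(\rho_{t+1}\|\pi)\le (1-c\,\eta\mu)\text{KL}(\rho_t\|\pi)$ for some absolute constant $c$, using $\eta\le 1/\mu$. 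The cleanest way to do this is to relate the KL divergence to the conditional KL after one half-step and exploit that the $y$-marginal is obtained by convolving with an isotropic Gaussian.

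Next I would bound the initial divergence. Initializing at the minimizer $x^\star$ of $f_{\text{oracle}}$, $\mu$-strong convexity gives $f_{\text{oracle}}(x)-f_{\text{oracle}}(x^\star)\ge \tfrac{\mu}{2}\|x-x^\star\|^2$ so $\pi$ is sub-Gaussian with parameter $1/\mu$, and an explicit calculation bounds $\text{KL}(\delta_{x^\star}\otimes\mathcal N(x^\star,\eta I)\|\pi_{\text{joint}})$ (after one Gaussian step to regularize the Dirac) by $\mathcal O(d\log(1/(\eta\mu)))$. The footnote about an approximate minimizer satisfying $\|x-x_{\text{opt}}\|^2\le d/\mu$ is handled because this only inflates the initial log-density by $\mathcal O(d)$.

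Combining the contraction rate $(1-c\eta\mu)$ with an initial KL of order $d\log(1/(\eta\mu))$, solving $(1-c\eta\mu)^T\cdot d\log(1/(\eta\mu))\le \varepsilon^2$ yields $T=\Theta\left(\tfrac{1}{\eta\mu}\log\tfrac{d}{\eta\mu\varepsilon}\right)$, and Pinsker converts to TV distance $\varepsilon$. Each outer iteration needs one Gaussian draw and one RGO call, so the RGO query count matches the claim. The hard part, and the place where I would lean most heavily on the argument from \cite{lee2020structured}, is the one-step KL contraction for the Gibbs sampler without a smoothness assumption on $f_{\text{oracle}}$: standard coupling proofs for Langevin-type chains use Lipschitz gradients, whereas here one must exploit the fact that the $y$-step is an exact Gaussian smoothing and the $x$-step is an exact RGO, so contraction comes from the joint log-Sobolev/entropy-decay structure of the alternating kernel rather than from gradient regularity.
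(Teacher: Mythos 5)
First, a point of reference: the paper does not prove Theorem \ref{thm:outer} at all --- it is quoted verbatim from \cite{lee2020structured} and used as a black box, so there is no in-paper proof to compare yours against. Judged as a standalone argument, your sketch has a genuine gap at exactly the load-bearing step: the one-iteration contraction of KL (or $\chi^2$) for the alternating kernel, which you state as ``one shows that one pair of sweeps contracts $\mathrm{KL}$ by a factor $\tfrac{\eta\mu}{1+\eta\mu}$'' and then explicitly defer to \cite{lee2020structured,lee2021structured}. Since that contraction \emph{is} the substance of the cited theorem, invoking it there makes the proposal circular rather than a proof; everything else in your outline (LSI constants of the two conditionals, the $\mathcal{O}(d\log(1/(\eta\mu)))$ bound on the initial divergence after regularizing the Dirac by one Gaussian step, the $\mathcal{O}(d)$ correction for an approximate minimizer, Pinsker, and the count of one Gaussian draw plus a constant number of RGO calls per iteration) is routine bookkeeping around that missing lemma. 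Note also that from a Dirac initialization one must work with the $y$-marginal (or a R\'enyi/$\chi^2$ warmness quantity) as you hint, and that the claimed per-step factor needs an actual derivation, since naively composing the two LSI constants of the conditional laws does not by itself yield entropy decay of the \emph{marginal} chain.

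A secondary point of attribution: the simultaneous-heat-flow/entropy-decay route you describe is the later ``proximal sampler'' analysis (forward heat flow for the $y$-step, its time reversal for the RGO step, strong log-concavity driving exponential KL decay along both), not the argument of \cite{lee2020structured}. As the paper's own remark in Subsection \ref{subsec:convex} indicates, the analysis in \cite{lee2020structured} (its Proposition 2 and Lemma 2) proceeds via isoperimetry/conductance with a warm start at the mode, which is how the dimension enters only through $\log\frac{d}{\eta\mu\varepsilon}$. So if you want a self-contained proof you must either (i) carry out the heat-flow contraction argument in full --- proving $\mathrm{KL}(\rho_{k+1}\|\pi)\le (1+\eta\mu)^{-c}\,\mathrm{KL}(\rho_k\|\pi)$ without any smoothness of $f_{\text{oracle}}$, which is genuinely nontrivial and is where the real work lies --- or (ii) reproduce the conductance/log-isoperimetry argument of the cited paper. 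As written, the proposal is a correct roadmap but not a proof.
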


We are now ready to describe our approach to sample from non-smooth potentials.
We first consider a regularized density $\exp(-g(x))$ where $g(x)=f(x)+\mu\|x-x^0\|^2/2$ and $x^0\in\R^d$ is  an arbitrary point but preferred to be close to the minimum set of $g$. Since $g$ is $\mu$-strongly convex, the alternating sampling framework is applicable to it. 
We then develop an efficient implementation of the RGO based on the proximal bundle method and rejection sampling for an arbitrary convex potential with bounded Lipschitz constant. This is the main contribution of this work.
Finally, we justify the sample generated from $\exp(-g(x))$ by using Algorithm \ref{alg:ASF} and the implementable RGO with a proper choice of $\mu$ is a sample within $\varepsilon$ total variation distance to the target density $\exp(-f(x))$.

\section{Key result: an implementable restricted Gaussian oracle}\label{sec:RGO}
% 1. discuss limitation of RGO. present our goal.\\
% 2. outline of our method for RGO. two major components. explain why: relax the proximal map assumption\\
% 3. review bundle method.\\
% 4. review rejection sampling. explain the intuition of using rejection sampling in our framework. Complexity analysis.
The bottleneck of applying the alternating sampling framework (Algorithm \ref{alg:ASF}) to sample from general log-concave distributions is the availability of RGO.
In this section, we focus on designing computationally efficient and implementable RGO for $\mu$-strongly convex $g$ of the form 
\begin{equation}\label{eq:g}
	g=f+\mu \|\cdot-x^0\|^2/2.
\end{equation}

Subsection \ref{subsec:opt-oracle} presents an implementation of the RGO for $g$ by using the proximal mapping of $f$ and rejection sampling.
In order to develop an implementable RGO for $g$ in the cases where there are no efficient optimization oracles for $f$, %(i.e., a general convex and Lipschitz continuous function on $\R^d$), 
we resort to the proximal bundle method, which is a standard method in convex non-smooth optimization.
Subsection \ref{subsec:bundle} briefly reviews the proximal bundle method and its iteration-complexity bound.
Finally, Subsection \ref{subsec:no opt-oracle} describes the implementation of RGO for $g$ based on rejection sampling and the proximal bundle method, instead of the proximal mapping of $f$.

% An interesting result in this section that may be of independent interest is a
Our algorithm designed for RGO can in fact 
% computationally efficient implementation of RGO 
be used for any convex and Lipschitz continuous function $f$. More specifically,
in both settings where the proximal mapping of $f$ exists or not, replacing $\mu$ by $0$, the implementations and results for RGO in this section are also applicable for $f$.

\subsection{Sampling with an optimization oracle}\label{subsec:opt-oracle}

Assume that $f$ has a proximal mapping and let 
\begin{equation}\label{def:x*}
	x^*=\underset{x\in\R^d}\argmin \left\lbrace g^\eta(x):=g(x)+\frac{1}{2\eta}\|x-y\|^2 \right\rbrace.
\end{equation}
Here $y$ is the output of ASF (Algorithm \ref{alg:ASF}) in the previous iteration. Note that solving \eqref{def:x*} is equivalent to invoking one proximal mapping of $f$ since $g(x)=f(x)+\mu\|x-x^0\|^2/2$; the quadratic term can be combined with that in \eqref{def:x*}.

The RGO in each iteration requires sampling from $g^\eta$. Our strategy is rejection sampling with a proper Gaussian proposal centered at $x^*$. This is summarized in Algorithm \ref{alg:RGO}.
The following result is useful in the complexity analysis of Algorithm \ref{alg:RGO}.
It is a special case of a more general result (Lemma \ref{lem:h1h2delta}) we present later and thus the proof is omitted. Throughout, denote
\begin{equation}\label{eq:etamu}
	\eta_\mu:=\eta/(1+\eta\mu).
\end{equation}
\begin{lemma}\label{lem:h1h2}
	Let 
	% $\eta_\mu:=\eta/(1+\eta\mu)$ and define 
	\[
	h_1:= \frac{1}{2\eta_\mu}\|\cdot-x^*\|^2 + g^\eta(x^*), \quad
	h_2 := \frac{1}{2\eta_\mu}\|\cdot-x^*\|^2 + 2M\|\cdot-x^*\| + g^\eta(x^*).
	\]
	% where
	% \begin{equation}
	%     \eta_\mu:=\frac{\eta}{1+\eta\mu}.
	% \end{equation}
	Then, for every $x\in \R^d$, we have $h_1(x) \le g^\eta(x) \le h_2(x)$.
\end{lemma}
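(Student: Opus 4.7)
The plan is to separate $g^\eta$ into its nonsmooth component $f$ and a purely quadratic component, handle each piece with the appropriate inequality (convexity/Lipschitz for $f$, exact second-order expansion for the quadratic), then glue them together using the first-order optimality condition at $x^*$.

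First, I would rewrite
\[
g^\eta(x) \;=\; f(x) + Q(x), \qquad Q(x) := \frac{\mu}{2}\|x-x^0\|^2 + \frac{1}{2\eta}\|x-y\|^2,
\]
and observe that $Q$ is a quadratic with constant Hessian $\bigl(\mu+\tfrac{1}{\eta}\bigr)I = \tfrac{1}{\eta_\mu}I$ (this is exactly the reason $\eta_\mu$ is defined as in \eqref{eq:etamu}). Consequently, $Q$ admits the exact expansion
\[
Q(x) \;=\; Q(x^*) + \langle \nabla Q(x^*), x-x^*\rangle + \frac{1}{2\eta_\mu}\|x-x^*\|^2.
\]
The first-order optimality condition for the strongly convex problem \eqref{def:x*} guarantees the existence of a subgradient $s\in\partial f(x^*)$ with $s = -\nabla Q(x^*)$, and since $f$ is $M$-Lipschitz one has $\|s\|\le M$, hence $\|\nabla Q(x^*)\|\le M$.

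For the lower bound $h_1 \le g^\eta$, I would invoke convexity of $f$ at $x^*$ with the chosen $s$, namely $f(x)\ge f(x^*)+\langle s, x-x^*\rangle$, add it to the quadratic expansion of $Q$, and note that the linear terms cancel because $s+\nabla Q(x^*)=0$. What remains is $g^\eta(x) \ge g^\eta(x^*) + \tfrac{1}{2\eta_\mu}\|x-x^*\|^2 = h_1(x)$.

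For the upper bound $g^\eta \le h_2$, I would use $M$-Lipschitz continuity to write $f(x)\le f(x^*)+M\|x-x^*\|$, and bound the linear term in the expansion of $Q$ by Cauchy--Schwarz: $\langle \nabla Q(x^*), x-x^*\rangle \le \|\nabla Q(x^*)\|\,\|x-x^*\| \le M\|x-x^*\|$. Summing these two contributions with the quadratic remainder yields $g^\eta(x) \le g^\eta(x^*) + 2M\|x-x^*\| + \tfrac{1}{2\eta_\mu}\|x-x^*\|^2 = h_2(x)$.

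There is no real obstacle here; the only subtle step is algebraically identifying the combined quadratic coefficient as $1/\eta_\mu$ and recognizing that the subgradient $s$ supplied by optimality automatically has norm at most $M$, which is what makes the same constant $M$ control both the Lipschitz slack on $f$ and the gradient of $Q$ at $x^*$ (hence the factor $2M$ in $h_2$). Since the lemma is stated as a specialization of the later Lemma \ref{lem:h1h2delta}, I would keep the argument short and flag that it is the $\delta=0$ (exact proximal point) case of that more general bound.
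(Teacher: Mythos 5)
Your proof is correct and follows essentially the same route as the paper: the paper obtains this lemma as the $\delta=0$ case of Lemma \ref{lem:h1h2delta}, whose proof likewise combines the exact quadratic expansion with coefficient $1/\eta_\mu$, the $M$-Lipschitz bound on $f$, and the optimality condition at the (here exact) minimizer to bound the linear term by $M\|x-x^*\|$. Your direct argument at $x^*$ is precisely that specialization, so nothing is missing.
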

% We now describe the implementation of the RGO for $g$ based on the proximal mapping of $f$.
\begin{algorithm}[H]
	\caption{Implementation of the RGO with an optimization oracle}
	\label{alg:RGO}
	\begin{algorithmic}
		\STATE 1. Compute $x^*$ as in \eqref{def:x*};
		\STATE 2. Generate  $X\sim \exp(-h_1(x))$;
		\STATE 3. Generate $U\sim {\cal U}[0,1]$;
		\STATE 4. If 
		\[
		U \leq \frac{\exp(-g^\eta(X))}{\exp(-h_1(X))},
		\]
		then accept $\tilde X=X$; otherwise, reject $X$ and go to step 2.
	\end{algorithmic}
\end{algorithm}

The next proposition justifies the correctness and gives the complexity of Algorithm \ref{alg:RGO}. Its proof is postponed to Appendix \ref{sec:proofs}.

\begin{proposition}\label{prop:rejection}
	Assume $f$ is convex and $M$-Lipschitz continuous.
	Let $g=f+\mu\|\cdot-x^0\|^2/2$ and
	\[
	% p(x)=
	p(x|y) \propto \exp\left(-g(x)-\frac{1}{2\eta}\|x-y\|^2\right)
	\]
	for a fixed $y$, then $\tilde X$ generated by Algorithm \ref{alg:RGO} is such that $\tilde X \sim p(x|y) $.
	If $\eta_\mu \le 1/(16 M^2 d)$, then
	the expected number of iteration in Algorithm \ref{alg:RGO} is at most 2.
\end{proposition}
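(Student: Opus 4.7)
The plan is to handle the two assertions separately. For correctness, I would invoke the standard rejection-sampling principle: Lemma \ref{lem:h1h2} gives $h_1 \le g^\eta$ pointwise, so the ratio $\exp(-g^\eta(X))/\exp(-h_1(X)) \in (0,1]$ is a valid acceptance probability. A short computation then shows that the density of an accepted $\tilde X$ is proportional to $\exp(-h_1(x)) \cdot \exp(-g^\eta(x))/\exp(-h_1(x)) = \exp(-g^\eta(x))$, which is exactly $p(\cdot \mid y)$ after normalization.

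For the expected number of iterations, the key identity is that this quantity equals $Z_1/Z^\eta$, where $Z_1 = \int \exp(-h_1(x))\,dx$ and $Z^\eta = \int \exp(-g^\eta(x))\,dx$. Since $h_1$ is an isotropic quadratic centered at $x^*$ with precision $\eta_\mu^{-1}$, we have $Z_1 = (2\pi\eta_\mu)^{d/2} \exp(-g^\eta(x^*))$. Using the upper bound $g^\eta \le h_2$ from Lemma \ref{lem:h1h2}, we get $Z^\eta \ge Z_2 := \int \exp(-h_2(x))\,dx$, so the expected number of iterations is at most $Z_1/Z_2$.

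The main step is then to bound $Z_1/Z_2$. After shifting the integration variable by $x^*$, this ratio equals $1/\mathbb{E}[\exp(-2M\|X\|)]$ for $X \sim \mathcal{N}(0,\eta_\mu I)$. Since $t \mapsto \exp(-2Mt)$ is convex, Jensen's inequality yields $\mathbb{E}[\exp(-2M\|X\|)] \ge \exp(-2M\,\mathbb{E}\|X\|)$, and a second application of Jensen (to $\sqrt{\cdot}$) bounds $\mathbb{E}\|X\| \le \sqrt{\mathbb{E}\|X\|^2} = \sqrt{d\eta_\mu}$. Combining these,
\[
\frac{Z_1}{Z_2} \le \exp\bigl(2M\sqrt{d\eta_\mu}\bigr).
\]
Under the hypothesis $\eta_\mu \le 1/(16M^2 d)$, the exponent is at most $1/2$, so the bound becomes $\exp(1/2) < 2$, as desired.

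I do not anticipate a serious obstacle: the correctness part is essentially bookkeeping, and the complexity bound reduces to a one-dimensional Jensen inequality once one recognizes that the proposal-to-target ratio is precisely a Gaussian expectation of $\exp(-2M\|X\|)$. The only mildly delicate point is ensuring that the double use of Jensen (once for $\exp(-2Mt)$, once for $\sqrt{\cdot}$) yields a constant sharp enough to be below $2$ under exactly the stated condition $\eta_\mu \le 1/(16 M^2 d)$, which it does with room to spare.
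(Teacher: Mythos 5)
Your proposal is correct, and while the bookkeeping part (rejection sampling yields the right law, expected iterations $= Z_1/Z^\eta \le Z_1/Z_2$ via Lemma \ref{lem:h1h2}) matches the paper, your quantitative step takes a genuinely different and more elementary route. The paper bounds $Z_2 = \int \exp(-h_2)$ by invoking Proposition \ref{lem:key}, an explicit lower bound on the radial integral $\int \exp\left(-\tfrac{1}{2\lam}\|x\|^2 - 2a\|x\|\right)dx$ proved by splitting into even/odd $d$, using the Gaussian-moment formulas of Lemma \ref{lem:Gaussian} and the Gamma-ratio (Wallis-type) estimate \eqref{ineq:double}. You instead note that $Z_2/Z_1 = \E\left[\exp(-2M\|X\|)\right]$ for $X\sim{\cal N}(0,\eta_\mu I)$ and apply Jensen twice ($e^{-2Mt}$ convex, then $\E\|X\|\le\sqrt{\E\|X\|^2}=\sqrt{\eta_\mu d}$), giving $Z_1/Z_2\le \exp\left(2M\sqrt{\eta_\mu d}\right)\le e^{1/2}<2$ under $\eta_\mu\le 1/(16M^2d)$; both Jensen steps are in the right direction, so the argument is sound. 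Your route is shorter, avoids all special-function computations and the parity case analysis, and even gives a slightly sharper constant ($e^{1/2}\approx 1.65$ versus $2$); moreover the same two-line argument would cover Proposition \ref{prop:expected} by taking linear coefficient $2M+\sqrt{2\delta/\eta_\mu}$ in place of $2M$. What the paper's heavier Proposition \ref{lem:key} buys is a reusable standalone integral bound that it cites again verbatim in that later proof, but for the statement at hand your argument is complete and arguably preferable.
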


% \begin{proof}
% It is a well-known result for rejection sampling that $\tilde X \sim p(x)$ and the probability that $\tilde X$ is accepted is
% \[
% \mathbb{P}\left(U \leq \frac{\exp(-g^\eta(X))}{\exp(-h_1(X))}\right) 
% =\frac{\int \exp(-g^\eta(x)) d x}{\int \exp(-h_1(x)) d x}.
% \]
% Using the assumption that $\eta_\mu \le 1/(16 M^2 d)$, Lemma \ref{lem:h1h2} and Proposition \ref{lem:key} that
% \[
% \int_{\R^d} \exp(-g^\eta(x)) d x \ge
% \int_{\R^d} \exp(-h_2(x)) d x \ge \exp(-g^\eta(x^*)) \frac{(2\pi\eta_\mu)^{d/2}}2.
% \]
% Moreover, it follows from the definition of $h_1$ and Lemma \ref{lem:Gaussian}(a) that
% \[
% \int_{\R^d} \exp(-h_1(x)) d x = \exp(-g^\eta(x^*)) (2\pi\eta_\mu)^{d/2}.
% \]
% The above three relations immediately imply that 
% \[
% \mathbb{P}\left(U \leq \frac{\exp(-g^\eta(X))}{\exp(-h_1(X))}\right) \ge \frac12,
% \]
% and hence that the expected number of iterations is bounded above by 2. 
% \end{proof}

\subsection{Review of the proximal bundle method}\label{subsec:bundle}

The proximal bundle method \cite{liang2020proximal,liang2021unified} is an efficient algorithm for solving convex non-smooth optimization problems.
In this subsection, we briefly review an approach to solve the subproblem considered in the proximal bundle method, the properties of the solution to the subproblem, and the iteration-complexity for solving the subproblem.

Consider the optimization subproblem (recall \eqref{eq:g})
\begin{equation}\label{eq:subproblem}
	g^\eta_*:= g^\eta (x^*)= \min \left\{ g^\eta(x):= g(x) + \frac{1}{2\eta}\|x-y\|^2: x\in \R^d\right \},
\end{equation}
and we aim at obtaining a $\delta$-solution (i.e., a point $\bar x$ such that $g^\eta(\bar x) - g^\eta_* \le \delta$) to \eqref{eq:subproblem}. 
The algorithm is summarized in Algorithm \ref{alg:PBS}.
% the method for obtaining a $\delta$-solution to subproblem \eqref{eq:subproblem}.

We make some remarks about Algorithm \ref{alg:PBS}. First, \eqref{def:Cj} shows the flexibility in the choice of $C_{j+1}$. More specifically, choosing $C_{j+1}=C_j \cup \{x_j\}$ results in the standard cutting-plane model $f_j$ which is underneath $f$, and choosing $C_{j+1}=A_j \cup \{x_j\}$ gives a cutting-plane model $f_j$ with less cuts.
Second, \eqref{def:xj} can be reformulated into a convex quadratic programming with affine constraints and the number of constraints is equal to the cardinality of $C_j$. Since the subproblem \eqref{def:xj} becomes harder to solve as the size of $C_j$ grows, we are in favor of choosing $C_{j+1}$ in \eqref{def:Cj} as lean as possible.

The following lemma contains technical results about Algorithm \ref{alg:PBS} that are useful in the complexity analysis in Subsection \ref{subsec:no opt-oracle}. Its proof is postponed to Appendix \ref{sec:proofs}.

\begin{algorithm}[H]
	\caption{Solving the Proximal Bundle Subproblem \eqref{eq:subproblem}}
	\label{alg:PBS}
	\begin{algorithmic}
		\STATE 0. Let $y$, $\eta>0$ and $\delta>0$ be given, and set $\tx_0=y$, $C_1=\{y\}$ and $j=1$;
		\STATE 1. Update $f_j = \max \left\lbrace  f(x)+\inner{f'(x)}{\cdot-x} : \, x \in C_j\right\rbrace$;
		\STATE 2. Define $g_j:=f_j + \mu\|\cdot-x^0\|^2/2$ and compute
		\begin{equation}
			x_j =\underset{u\in  \R^n}\argmin
			\left\lbrace g_j^\eta(u):= g_j(u) +\frac{1}{2\eta}\|u- y\|^2 \right\rbrace, \label{def:xj}
		\end{equation}
		\[
		\tx_j\in \Argmin\left\lbrace g^\eta(u): u\in \{x_j, \tx_{j-1}\}\right\rbrace;
		\]
		\STATE 3. If $g^\eta(\tx_j) - g_j^\eta(x_j)\le \delta$, then {\bf stop}; else, go to step 4;
		\STATE 4. Choose $C_{j+1}$ such that
		\begin{equation}\label{def:Cj}
			A_{j} \cup \{x_j\} \subset C_{j+1} \subset C_{j}\cup \{x_j\}
		\end{equation}
		where $A_{j}:=\left\lbrace x\in C_{j}: f(x)+\inner{f'(x)}{x_j-x} =f_j(x_j) \right\rbrace$;
		\STATE 5. Set $ j $ \ensuremath{\leftarrow} $ j+1 $ and go to step 1.
	\end{algorithmic}
\end{algorithm}

\begin{lemma}\label{lem:bundle}
	Assume $f$ is convex and $M$-Lipschitz continuous. Let $j$ denote the last iteration index, then the following statements hold:
	\begin{itemize}
		\item[a)] $f_j\le f$, $g_j \le g$ and $g_j^\eta(x_j) + \|x-x_j\|^2/(2\eta_\mu) \le g_j^\eta(x)$ for every $x\in \R^d$;
		\item[b)] $g^\eta(\tx_j) - g_j^\eta(x_j) \le \delta$;
		\item[c)] $\left\|\mu(x_j-x^0) + (x_j-y)/\eta \right\| \le M$;
		\item[d)] $\|x_j - \tx_j\|^2 \le 2\eta_\mu \delta$.
	\end{itemize}
\end{lemma}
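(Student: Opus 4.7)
The plan is to prove the four items in order, exploiting the cutting-plane minorization structure of $f_j$ together with strong convexity of the regularized subproblem. For (a), each affine function $x\mapsto f(\bar x)+\langle f'(\bar x),x-\bar x\rangle$ is a global minorant of $f$ by convexity, so the pointwise maximum $f_j$ over $\bar x\in C_j$ also satisfies $f_j\le f$, and adding $\mu\|\cdot-x^0\|^2/2$ yields $g_j\le g$. The function $g_j^\eta$ decomposes as $f_j + \mu\|\cdot-x^0\|^2/2 + \|\cdot-y\|^2/(2\eta)$; the two quadratic terms combine into a $(\mu+1/\eta)$-strongly convex quadratic, and by \eqref{eq:etamu} one has $\mu+1/\eta=1/\eta_\mu$, so $g_j^\eta$ is $1/\eta_\mu$-strongly convex. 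The last inequality in (a) then follows from the standard strong-convexity bound at the minimizer $x_j$. Item (b) is immediate from the stopping test at step 3 of Algorithm \ref{alg:PBS}.

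For (c), I would write the first-order optimality condition for \eqref{def:xj}: there exists $s\in \partial f_j(x_j)$ such that
\[
s + \mu(x_j-x^0) + (x_j-y)/\eta = 0.
\]
The crux is that $f_j$ inherits the $M$-Lipschitz constant of $f$: since $f$ is $M$-Lipschitz, $\|f'(\bar x)\|\le M$ for each $\bar x \in C_j$, and $f_j$ is a pointwise maximum of affine functions with these slopes, so by the standard subdifferential formula for a max of affine functions its subgradients are convex combinations of those slopes and therefore have norm at most $M$. Hence $\|s\|\le M$, which rearranges to the claimed bound $\|\mu(x_j-x^0)+(x_j-y)/\eta\|\le M$.

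For (d), I would chain the previous three parts. Applying the strong-convexity inequality from (a) with $x=\tx_j$ gives
\[
\frac{\|\tx_j-x_j\|^2}{2\eta_\mu} \le g_j^\eta(\tx_j)-g_j^\eta(x_j).
\]
Because $g_j\le g$ by (a), we have $g_j^\eta(\tx_j)\le g^\eta(\tx_j)$, and (b) bounds $g^\eta(\tx_j)-g_j^\eta(x_j)\le\delta$. Combining yields $\|x_j-\tx_j\|^2\le 2\eta_\mu\delta$.

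The only step that requires a moment of care is the $M$-Lipschitz property of $f_j$ used in (c), since one must correctly invoke the max-rule for subdifferentials of polyhedral convex functions; beyond that, each item reduces to a routine convex-analysis computation, so I do not anticipate any serious obstacle.
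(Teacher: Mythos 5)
Your proposal is correct and follows essentially the same route as the paper: minorization of $f$ by the cutting-plane model plus $1/\eta_\mu$-strong convexity for (a), the stopping criterion for (b), the optimality condition of \eqref{def:xj} together with the bound $\|\partial f_j\|\le M$ inherited from the $M$-Lipschitz continuity of $f$ for (c), and chaining (a) with $x=\tx_j$, $g_j\le g$, and (b) for (d). Your explicit justification via the max-rule for subdifferentials in (c) is simply a spelled-out version of the paper's terse argument.
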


% \begin{proof}
% a) The first two inequalities directly follow from the convexity of $f$, and the definitions of $g$ and $g_j$. The third inequality follows from \eqref{def:xj}.

% b) This statement immediately follows from step 3 of Algorithm \ref{alg:PBS}.

% c) It follows from the optimality condition of \eqref{def:xj} and the definition of $g_j$ that
% \[
% -\mu(x_j-x^0) - \frac{x_j-y}{\eta} \in \partial f_j(x_j).
% \]
% This inclusion and the fact that $\|f'(x)\|\le M$ imply that c) holds.

% d) %The proof can be found in Lemma 5.1(e) of \cite{liang2020proximal}. 
% The last inequality in (a) with $x=\tx_j$ and (b) imply this statement.
% \end{proof}

The following result states the iteration-complexity bound for Algorithm \ref{alg:PBS} to obtain a $\delta$-solution to the subproblem \eqref{eq:subproblem}. We have omitted the proof since it is relatively technical and beyond the scope of this paper, however, a complete proof can be found in Section 4 of \cite{liang2021unified}.

\begin{proposition}\label{prop:bundle}
	Algorithm \ref{alg:PBS} takes $\tilde {\cal O}(\eta_\mu M^2/\delta + 1)$ iterations to terminate, and each iteration solves an affinely constrained convex quadratic programming problem.
\end{proposition}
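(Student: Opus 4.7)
The plan is to analyze Algorithm \ref{alg:PBS} by tracking the primal-dual gap $\Delta_j := g^\eta(\tx_j) - g_j^\eta(x_j)$, which by Lemma \ref{lem:bundle}(a) is nonnegative and upper bounds the true optimality gap $g^\eta(\tx_j) - g^\eta_*$; termination occurs precisely when $\Delta_j \le \delta$. First I would verify that the sequence of lower bounds $\{g_j^\eta(x_j)\}_j$ is nondecreasing: since \eqref{def:Cj} forces the active cuts $A_j$ at $x_j$ and the point $x_j$ itself to appear in $C_{j+1}$, the new model $f_{j+1}$ dominates $f_j$ in a neighborhood of $x_j$, and the $(1/\eta_\mu)$-strong convexity of $g_j^\eta$ inherited from the quadratic term controls how $x_{j+1}$ moves away from $x_j$.

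The main technical step is a per-iteration progress estimate. The crucial geometric fact is that $x_j \in C_{j+1}$, so the affine piece $f(x_j) + \inner{f'(x_j)}{\cdot - x_j}$ appears in $f_{j+1}$, and hence $f_{j+1}(x_j) = f(x_j)$; in other words, the updated model agrees with $f$ at the previous minimizer. Combining this with (i) Lemma \ref{lem:bundle}(d), which relates $\|x_j - \tx_j\|$ to $\Delta_j$ through the strong convexity of $g_j^\eta$, (ii) the bound $\|f'(x_j)\| \le M$ from $M$-Lipschitz continuity, and (iii) the strong convexity of $g_{j+1}^\eta$, one can derive a progress inequality roughly of the form
\[
g_{j+1}^\eta(x_{j+1}) - g_j^\eta(x_j) \;\ge\; c \,\cdot\, \frac{\min\{\Delta_j,\,\eta_\mu M^2\}^2}{\eta_\mu M^2}
\]
for an absolute constant $c>0$. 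Since $g_j^\eta(x_j) \le g^\eta_*$ is uniformly bounded above, summing the progress bound will constrain the total number of iterations.

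Carrying out this recursion is where I expect the main difficulty. Following the template of \cite{liang2021unified}, the natural tactic is a two-phase analysis. In the \emph{large-gap} phase where $\Delta_j$ substantially exceeds $\delta$, the displayed estimate gives a contraction $\Delta_{j+1} \le (1 - \kappa) \Delta_j$ with $\kappa$ depending only on problem constants, producing $O(\log(\Delta_0/\delta))$ iterations and accounting for the logarithmic factor absorbed into $\tilde{\cal O}$. In the \emph{small-gap} phase where $\Delta_j = O(\delta)$, contraction is no longer available but the progress estimate still yields an additive decrease of order $\delta^2/(\eta_\mu M^2)$ per iteration, producing at most $O(\eta_\mu M^2/\delta)$ additional iterations before $\Delta_j \le \delta$ triggers termination. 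Adding the two phases gives the claimed $\tilde{\cal O}(\eta_\mu M^2/\delta + 1)$ complexity.

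The ``affinely constrained convex quadratic programming'' part is then immediate from the structure of \eqref{def:xj}: because $f_j$ is the pointwise maximum of $|C_j|$ affine functions, one reformulates \eqref{def:xj} by introducing a scalar epigraph variable $t \ge f(x) + \inner{f'(x)}{u-x}$ for each $x \in C_j$, so the subproblem becomes minimization of the strongly convex quadratic $t + \mu\|u-x^0\|^2/2 + \|u-y\|^2/(2\eta)$ subject to $|C_j|$ linear inequalities in $(u,t)$.
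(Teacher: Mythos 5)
You should first note that the paper itself does not prove Proposition \ref{prop:bundle}: it explicitly defers to Section 4 of \cite{liang2021unified}, so your outline has to be measured against that analysis. In spirit you are on the same track as the cited proof: you track the gap $\Delta_j=g^\eta(\tx_j)-g_j^\eta(x_j)$, you use that $x_j\in C_{j+1}$ forces $f_{j+1}(x_j)=f(x_j)$, you use the retained active cuts $A_j$ to make the lower bounds $g_j^\eta(x_j)$ nondecreasing (the correct form of your ``dominates in a neighborhood'' remark is the global aggregate-cut inequality $g_{j+1}^\eta(\cdot)\ge g_j^\eta(x_j)+\tfrac{1}{2\eta_\mu}\|\cdot-x_j\|^2$, which holds because the prox problem built from the active cuts alone still has minimizer $x_j$ with the same optimal value), you bring in $\|f'\|\le M$ to get a per-iteration increase of the lower bound of order $\min\{\Delta_j,\eta_\mu M^2\}^2/(\eta_\mu M^2)$, and your epigraph reformulation of \eqref{def:xj} correctly justifies the ``affinely constrained convex QP'' claim.

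There are, however, two concrete gaps in how you convert the progress estimate into the stated iteration count. First, your ``large-gap'' phase is mis-analyzed: from a decrease of order $\Delta_j^2/(\eta_\mu M^2)$ the relative contraction factor is $\Delta_j/(\eta_\mu M^2)$, which is not bounded below by a constant $\kappa$ once $\Delta_j$ drops below $\eta_\mu M^2$; so the claimed geometric contraction $\Delta_{j+1}\le(1-\kappa)\Delta_j$ and the $O(\log(\Delta_0/\delta))$ count for all $\Delta_j\gg\delta$ do not follow from your displayed inequality. The standard repair, and essentially what the cited analysis does, is to exploit monotonicity of $\Delta_j$ and telescope reciprocals (or, equivalently, bucket the iterations into dyadic gap levels $\Delta_j\approx 2^k\delta$, each level costing $O(\eta_\mu M^2/(2^k\delta))$ iterations); summing the geometric series already gives $O(\eta_\mu M^2/\delta)$ total, with at most a logarithmic overhead absorbed by $\tilde{\cal O}$. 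Second, to obtain a bound of the stated form (and the additive $+1$) you still need to control the regime $\Delta_j\ge\eta_\mu M^2$, i.e., you need an a priori bound such as $\Delta_1\le g^\eta(x_1)-g_1^\eta(x_1)=f(x_1)-f(y)-\inner{f'(y)}{x_1-y}\le 2M\|x_1-y\|=O(\eta_\mu M^2)$, using Lipschitz continuity and a Lemma \ref{lem:bundle}(c)-type estimate on $\|x_1-y\|$; your appeal to ``$g_j^\eta(x_j)\le g^\eta_*$ is uniformly bounded'' does not by itself remove the dependence on the initial gap. With these two repairs, and an actual derivation of the progress inequality you currently only assert, your plan matches the proof in \cite{liang2021unified}.
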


\subsection{Sampling without an optimization oracle}\label{subsec:no opt-oracle}

Let $j$ denote the last iteration index of Algorithm \ref{alg:PBS}, i.e., Lemma \ref{lem:bundle} holds with $j$.
Define
\begin{align}
	h_1 &:= \frac{1}{2\eta_\mu}\|\cdot-x_j\|^2 + g^\eta(\tx_j) - \delta, \label{def:tf} \\
	h_2 &:=  \frac{1}{2\eta_\mu}\|\cdot-\tx_j\|^2 + \left(2M + \frac{\sqrt{2\delta}}{\sqrt{\eta_\mu}} \right)\|\cdot-\tx_j\| + g^\eta(\tx_j). \label{def:h}
\end{align}

Algorithms \ref{alg:RGO-bundle} describes the implementation of RGO for $g$ based on Algorithm \ref{alg:PBS} and rejection sampling. It differs from Algorithm \ref{alg:RGO} in that: 1) it uses Algorithm \ref{alg:PBS} to compute an approximate solution to \eqref{eq:subproblem} instead of calling the proximal mapping $f$ as in \eqref{def:x*}.

\begin{algorithm}[H]
	\caption{Implementation of the RGO without an optimization oracle}
	\label{alg:RGO-bundle}
	\begin{algorithmic}
		\STATE 1. Compute $x_j$ and $\tx_j$ as in Algorithm \ref{alg:PBS};
		\STATE 2. Generate  $X\sim \exp(-h_1(x))$;
		\STATE 3. Generate $U\sim {\cal U}[0,1]$;
		\STATE 4. If 
		\[
		U \leq \frac{\exp(-g^\eta(X))}{\exp(-h_1(X))},
		\]
		then accept $\tilde X=X$; otherwise, reject $X$ and go to step 2.
	\end{algorithmic}
\end{algorithm}

The following lemma is a counterpart of Lemma \ref{lem:h1h2} in the context of RGO without an optimization oracle and plays an important role in Proposition \ref{prop:expected}. It reduces to Lemma \ref{lem:h1h2} when $\delta = 0$.

\begin{lemma}\label{lem:h1h2delta}
	Assume $f$ is convex and M-Lipschitz continuous.
	Let $g=f+\mu\|\cdot-x^0\|^2/2$ and $g^\eta$ be as in \eqref{def:x*}.
	Then, for every $x\in \R^d$, we have 
	\begin{equation}\label{ineq:sandwich}
		h_1(x) \le g^\eta(x) \le h_2(x)
	\end{equation}
	where $h_1$ and $h_2$ are as in \eqref{def:tf} and \eqref{def:h}, respectively.
\end{lemma}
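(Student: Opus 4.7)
The plan is to establish the two inequalities in \eqref{ineq:sandwich} separately, relying on the four conclusions of Lemma \ref{lem:bundle} about the proximal bundle iterates $x_j, \tilde x_j$. The lower bound is essentially a direct consequence of Lemma \ref{lem:bundle}(a)--(b), while the upper bound requires a bit more bookkeeping because we must recenter the quadratic from its natural center to $\tilde x_j$.

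For the lower bound $h_1 \le g^\eta$, I would start from Lemma \ref{lem:bundle}(a), which gives both $g_j \le g$ (hence $g_j^\eta \le g^\eta$) and the $(1/\eta_\mu)$-strong convexity estimate $g_j^\eta(x) \ge g_j^\eta(x_j) + \|x-x_j\|^2/(2\eta_\mu)$. Chaining these with Lemma \ref{lem:bundle}(b), $g_j^\eta(x_j) \ge g^\eta(\tilde x_j) - \delta$, immediately yields $g^\eta(x) \ge \|x-x_j\|^2/(2\eta_\mu) + g^\eta(\tilde x_j) - \delta = h_1(x)$.

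For the upper bound $g^\eta \le h_2$, I would first complete the square: since $\mu\|\cdot-x^0\|^2/2 + \|\cdot-y\|^2/(2\eta)$ is a quadratic with Hessian $(1/\eta_\mu)I$, one can write it as $\|\cdot-z\|^2/(2\eta_\mu) + C$ for a unique center $z$ and constant $C$. Expanding this recentered quadratic around $\tilde x_j$ gives
\[
g^\eta(x) = f(x) - f(\tilde x_j) + \tfrac{1}{2\eta_\mu}\|x-\tilde x_j\|^2 + \tfrac{1}{\eta_\mu}\inner{x-\tilde x_j}{\tilde x_j - z} + g^\eta(\tilde x_j).
\]
The Lipschitz bound gives $f(x) - f(\tilde x_j) \le M\|x-\tilde x_j\|$, and Cauchy--Schwarz bounds the cross term by $\|\tilde x_j-z\|/\eta_\mu \cdot \|x - \tilde x_j\|$. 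The key observation, which is the main obstacle, is that $(\tilde x_j - z)/\eta_\mu = \mu(\tilde x_j - x^0) + (\tilde x_j - y)/\eta$ is the ``gradient of the quadratic part'' evaluated at $\tilde x_j$, and we have no direct bound on it; we only know from Lemma \ref{lem:bundle}(c) that the analogous quantity at $x_j$ is bounded by $M$. I would overcome this by using the triangle inequality
\[
\left\|\mu(\tilde x_j - x^0) + \tfrac{\tilde x_j - y}{\eta}\right\| \le \left\|\mu(x_j - x^0) + \tfrac{x_j - y}{\eta}\right\| + \tfrac{1}{\eta_\mu}\|\tilde x_j - x_j\|,
\]
then applying Lemma \ref{lem:bundle}(c) to the first term and Lemma \ref{lem:bundle}(d) to the second to get the bound $M + \sqrt{2\delta/\eta_\mu}$. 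Plugging back in yields the coefficient $2M + \sqrt{2\delta}/\sqrt{\eta_\mu}$ in front of $\|x - \tilde x_j\|$ and completes the verification that $g^\eta(x) \le h_2(x)$.
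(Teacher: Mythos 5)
Your proposal is correct and follows essentially the same route as the paper's proof: the lower bound chains Lemma \ref{lem:bundle}(a)--(b) exactly as in the paper, and your upper bound—Lipschitz bound on $f$, Cauchy--Schwarz on the cross term, and the shift from $\tilde x_j$ to $x_j$ controlled by Lemma \ref{lem:bundle}(c)--(d)—is the paper's argument with the quadratic merely rewritten in completed-square form rather than expanded term by term.
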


\begin{proof}
	Using Lemma \ref{lem:bundle}(a)-(b) and the definition of $g_j^\eta$, we have
	\begin{align*}
		g(\tx_j) - g(x) + \frac{1}{2\eta_\mu}\|x-x_j\|^2 
		& \le g(\tx_j) - g_j(x) + \frac{1}{2\eta_\mu}\|x-x_j\|^2 \\
		& \le g(\tx_j) - g_j^\eta(x_j) + \frac{1}{2\eta}\|x-y\|^2 \\
		& \le \delta - \frac{1}{2\eta}\|\tx_j-y\|^2 + \frac{1}{2\eta}\|x-y\|^2.
	\end{align*}
	The first inequality in \eqref{ineq:sandwich} holds in view of the definition of $h_1$ in \eqref{def:tf}.
	Using the definition of $g^\eta$ in \eqref{def:x*} and the fact that $f$ is $M$-Lipschtz, we have
	\begin{align*}
		&g^\eta(x) - g^\eta(\tx_j) \\
		&= f(x) - f(\tx_j) + \frac{\mu}{2}\|x-x^0\|^2 - \frac{\mu}{2}\|\tx_j-x^0\|^2 + \frac{1}{2\eta}\|x-y\|^2 - \frac{1}{2\eta}\|\tx_j-y\|^2 \\
		& \le M\|x-\tx_j\| + \frac{\mu}{2}\|x-\tx_j\|^2 + \mu\inner{x-\tx_j}{\tx_j-x^0} + \frac{1}{2\eta}\|x-\tx_j\|^2 + \frac{1}{\eta}\inner{x-\tx_j}{\tx_j-y} \\
		&= M\|x-\tx_j\| + \frac{1}{2\eta_\mu}\|x-\tx_j\|^2 + \mu\inner{x-\tx_j}{\tx_j-x_j + x_j -x^0}
		+ \frac{1}{\eta}\inner{x-\tx_j}{\tx_j-x_j + x_j -y}.
	\end{align*}
	The above inequality, the Cauchy-Schwarz inequality and Lemma \ref{lem:bundle}(c)-(d) imply that
	\begin{align*}
		&g^\eta(x) - g^\eta(\tx_j) \\
		&\le M\|x-\tx_j\| + \frac{1}{2\eta_\mu}\|x-\tx_j\|^2 + \frac{1}{\eta_\mu}\|x-\tx_j\| \|\tx_j-x_j\|
		+ \|x-\tx_j\| \left\| \mu(x_j-x^0) + \frac{x_j-y}{\eta} \right\| \\
		&\le M\|x-\tx_j\| + \frac{1}{2\eta_\mu}\|x-\tx_j\|^2 + \frac{\sqrt{2\delta}}{\sqrt{\eta_\mu}}\|x-\tx_j\|  + M\|x-\tx_j\| \\
		&= \left(2M + \frac{\sqrt{2\delta}}{\sqrt{\eta_\mu}} \right) \|x-\tx_j\| + \frac{1}{2\eta_\mu}\|x-\tx_j\|^2.
	\end{align*}
	It follows from the above inequality and the definition of $h_2$ in \eqref{def:h} that the second inequality in \eqref{ineq:sandwich} holds.
\end{proof}

The next proposition is the main result of this subsection and shows that the number of rejections in Algorithm \ref{alg:RGO-bundle} is small in expectation. Hence, the implementation of RGO for $g$ is computationally efficient.
Its proof is postponed to Appendix \ref{sec:proofs}.

\begin{proposition}\label{prop:expected}
	If 
	\begin{equation}\label{ineq:assumption}
		\eta_\mu \le \frac{1}{64 M^2 d}, \quad \delta \le \frac{1}{32d},
	\end{equation}
	then the expected number of iterations in the rejection sampling is at most $3$.
\end{proposition}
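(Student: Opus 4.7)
The plan is to recognize that the expected number of iterations in the rejection sampling step equals $Z_1/Z_{g^\eta}$, where $Z_1 := \int_{\R^d} \exp(-h_1(x))\,dx$ is the normalizing constant of the proposal and $Z_{g^\eta} := \int_{\R^d} \exp(-g^\eta(x))\,dx$ is that of the target. Since Lemma \ref{lem:h1h2delta} gives $g^\eta(x) \le h_2(x)$, we have $Z_{g^\eta} \ge Z_2 := \int_{\R^d} \exp(-h_2(x))\,dx$, so it suffices to show $Z_1/Z_2 \le 3$. (Note that Lemma \ref{lem:h1h2delta} also guarantees $\exp(-g^\eta(X))/\exp(-h_1(X)) \le 1$, so the acceptance test is well-defined.)

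Next, I would compute both normalizing constants in closed form. Since $h_1$ is a quadratic centered at $x_j$ shifted by the constant $g^\eta(\tilde x_j) - \delta$,
\[
Z_1 = \exp\bigl(-g^\eta(\tilde x_j) + \delta\bigr) \, (2\pi\eta_\mu)^{d/2}.
\]
For $Z_2$, after translating by $\tilde x_j$ and switching to polar coordinates,
\[
Z_2 = \exp\bigl(-g^\eta(\tilde x_j)\bigr) \cdot \frac{2\pi^{d/2}}{\Gamma(d/2)} \int_0^\infty r^{d-1} \exp\!\left(-\frac{r^2}{2\eta_\mu} - Cr\right) dr,
\]
with $C := 2M + \sqrt{2\delta/\eta_\mu}$. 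Substituting $t = r/\sqrt{\eta_\mu}$ and recognizing the Gaussian integral as the normalizing constant of a chi-distribution on $d$ degrees of freedom yields
\[
\frac{Z_1}{Z_2} \;=\; \frac{\exp(\delta)}{\E\!\left[\exp(-\tilde C\, T)\right]}, \qquad \tilde C := C\sqrt{\eta_\mu} = 2M\sqrt{\eta_\mu} + \sqrt{2\delta},
\]
where $T \sim \chi_d$.

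To lower bound the expectation, I would apply Jensen's inequality to get $\E[\exp(-\tilde C T)] \ge \exp(-\tilde C\, \E[T])$, and then use the chi moment bound $\E[T] \le \sqrt{\E[T^2]} = \sqrt{d}$. Under the hypothesis \eqref{ineq:assumption}, the elementary estimate $\tilde C^2 \le 2(4 M^2\eta_\mu + 2\delta) \le 2\bigl(\tfrac{1}{16d} + \tfrac{1}{16d}\bigr) = \tfrac{1}{4d}$ gives $\tilde C\sqrt{d} \le 1/2$, hence $\E[\exp(-\tilde C T)] \ge e^{-1/2}$. Combining, $Z_1/Z_2 \le \exp(\delta + 1/2) \le \exp(1/32 + 1/2) < 3$, which concludes the proof.

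The main technical step is handling the $C\|\cdot\|$ cross term in $h_2$: a naive completion-of-square approach produces an extraneous $(3/2)^{d/2}$ factor that ruins the dimension dependence. The polar-coordinate reduction is essential because it isolates the linear-in-radius perturbation into a one-dimensional chi expectation, which then admits a dimension-free Jensen bound precisely because $\tilde C$ scales like $d^{-1/2}$ under \eqref{ineq:assumption}. The calibration of the constants $64$ and $32$ in \eqref{ineq:assumption} is exactly what forces $\tilde C\sqrt{d} \le 1/2$ and thus $e^{1/2} < 3$.
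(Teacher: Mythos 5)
Your proposal is correct, and it reaches the paper's conclusion by a genuinely different treatment of the key technical step. The outer structure coincides with the paper's: the expected number of trials is the ratio of normalizing constants, $\int e^{-h_1}$ is evaluated exactly as $\exp(-g^\eta(\tx_j)+\delta)(2\pi\eta_\mu)^{d/2}$, and the sandwich of Lemma \ref{lem:h1h2delta} reduces everything to a lower bound on $\int e^{-h_2}$. Where the paper invokes Proposition \ref{lem:key} --- whose proof goes through polar coordinates, a parity case analysis on $d$, and the Gamma-function (Wallis-type) inequalities of Lemma \ref{lem:Gamma} to obtain the factor $\tfrac12$ --- you instead write the radial integral as a $\chi_d$-expectation, $\int_{\R^d}\exp\bigl(-\tfrac{1}{2\eta_\mu}\|x\|^2 - C\|x\|\bigr)dx = (2\pi\eta_\mu)^{d/2}\,\E\bigl[e^{-\tilde C T}\bigr]$ with $\tilde C = 2M\sqrt{\eta_\mu}+\sqrt{2\delta}$, and lower bound it by Jensen together with $\E[T]\le\sqrt{\E[T^2]}=\sqrt d$. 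Your calibration $\tilde C\sqrt d\le 1/2$ under \eqref{ineq:assumption} matches the paper's observation $2\sqrt{\eta_\mu}M+\sqrt{2\delta}\le 1/(2\sqrt d)$, but your route replaces the factor $\tfrac12$ by the sharper $e^{-1/2}$, giving the cleaner final bound $e^{\delta+1/2}\approx 1.7 < 3$ versus the paper's $2e^{\delta}\le 2(1+2\delta)\le 3$; it is also shorter and dimension-free in its bookkeeping, and would in fact yield a slightly sharper version of Proposition \ref{lem:key} itself (and, via the same chi-representation, it parallels the mechanism used in the smooth case, Proposition \ref{prop:rejection-smooth}). The only caveat is cosmetic: you should state explicitly that $h_1\le g^\eta$ makes the acceptance ratio at most one (you do note this) and that the geometric-trial argument then gives the mean $Z_1/Z_{g^\eta}$; otherwise the argument is complete.
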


% \begin{proof}
% We first observe that the assumption \eqref{ineq:assumption} implies that
% \[
%     2\sqrt{\eta_\mu} M + \sqrt{2\delta} \le \frac{1}{2\sqrt{d}},
% \]
% which satisfies the assumption in Proposition \ref{lem:key} with
% \[
% \lam = \eta_\mu, \quad a = M + \frac{\sqrt{\delta}}{\sqrt{2\eta_\mu}}.
% \]
% Using the definition of $h_2$ in \eqref{def:h} and Lemma \ref{lem:key},
% we have
% \[
% \int_{\R^d} \exp(-h_2(x)) d x \ge \frac12 \exp(-g^\eta(\tx_j)) (2\pi \eta_\mu)^{d/2}.
% \]
% It follows from the definition of $h_1$ in \eqref{def:tf} and Lemma \ref{lem:Gaussian}(a) with $\lam=\eta_\mu$ that
% \[
% \int \exp(-h_1(x)) dx 
% = \exp\left( -g^\eta(\tx_j) + \delta \right) (2\pi \eta_\mu)^{d/2}.
% \]
% We conclude that
% \begin{align*}
%     \mathbb{P}\left(U \leq \frac{\exp(-g^\eta(X))}{\exp(-h_1(X))}\right) &=\frac{\int \exp(-g^\eta(x)) d x}{\int \exp(-h_1(x)) d x} \\
%     &\ge \frac{\int \exp(-h_2(x)) d x}{\exp\left( -g^\eta(\tx_j)+\delta \right) (2\pi \eta_\mu)^{d/2}} \ge \frac12 \exp(-\delta),
% \end{align*}
% and the expected number of the iterations is
% \[
% \frac{1}{\mathbb{P}\left(U \leq \frac{\exp(-g^\eta(X))}{\exp(-f(X))}\right)}
% \le 2\exp(\delta) \le 2(1+2\delta) \le 2\left(1+\frac{1}{16d}\right) \le 3
% \]
% where the last two inequalities are due to the second inequality in \eqref{ineq:assumption}.
% \end{proof}

\section{Sampling from non-smooth potentials}\label{sec:main}
We now combine our implementation of RGO (Algorithm \ref{alg:RGO-bundle}) and the ASF (Algorithm \ref{alg:ASF}) to sample from log-concave probability densities with non-smooth potentials.
This section contains two subsections. Subsection \ref{subsec:strongly convex} presents
% one of the main results of the paper, i.e., 
the iteration-complexity bound for sampling from $\exp(-g(x))$ where $g=f+\mu\|\cdot-x^0\|^2/2$. Based on this, Subsection \ref{subsec:convex} provides
% the other main result about 
the iteration-complexity for sampling from $\exp(-f(x))$ where $f$ is convex and Lipschitz continuous. Apart from being a transition step to our final result for sampling without smoothness, the results in Subsection \ref{subsec:strongly convex} provide an efficient method to sample from composite potentials of the form $f+\mu\|\cdot-x^0\|^2/2$ and may be of independent interest.

\subsection{Total complexity for strongly convex potential}\label{subsec:strongly convex}

Using the efficient implementation of RGO for $g$ developed in Section \ref{sec:RGO} and the alternating sampling framework Algorithm \ref{alg:ASF}, we are now able to sample from $\exp(-g(x))$ and establish the complexity for this sampling task.
The following theorem states the iteration-complexity bound for Algorithm \ref{alg:ASF} using Algorithm \ref{alg:RGO-bundle} as the RGO to sample from $\exp(-g(x))$. Note that this iteration-complexity bound is poly-logarithmic in the precision $\varepsilon$ in terms of total variation.
Its proof is postponed to Appendix \ref{sec:proofs}.

\begin{theorem}\label{thm:g}
	Let $x^0\in\R^d$, $\varepsilon>0$, $\delta>0$, $M>0$, $\mu>0$ and $\eta>0$ satisfying
	\begin{equation}\label{ineq:eta}
		\frac{\delta}{M^2}\le \eta \le \min\left\lbrace\frac{1}{64M^2 d}, \frac{1}{\mu} \right \rbrace
	\end{equation}
	be given.
	Let $\pi$ be a distribution on $\R^d$ satisfying $\pi(x) \propto \exp(-g(x))=\exp(-f(x) - \mu\|x-x^0\|^2/2)$ where $f$ is convex and $M$-Lipschitz continuous on $\R^d$. 
	Consider Algorithm \ref{alg:ASF} using Algorithm \ref{alg:RGO-bundle} as an RGO for step 1, initialized at the minimizer of $g$,
	% a $\beta$-warm start, 
	then the iteration-complexity bound for obtaining $\varepsilon$ total tolerance to $\pi$ in terms of total variation is
	\begin{equation}\label{cmplx:mu}
		\tilde {\cal O}\left(
		\frac{M^2}{\mu \delta} \log\left( \frac{d}{\eta \mu \varepsilon}\right) + 1  \right),
	\end{equation}
	and each iteration queries one subgradient oracle of $f$ and solves a quadratic programming problem. %{\chen Maybe we can also present the complexity when the proximal map is available}
	Moreover, the number of Gaussian distribution sampling queries in Algorithm \ref{alg:ASF} is
	\begin{equation}\label{cmplx:outer}
		\Theta \left(\frac{1}{\eta \mu} \log\left( \frac{d}{\eta \mu \varepsilon}\right) + 1 \right).
	\end{equation}
\end{theorem}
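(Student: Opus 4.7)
The plan is to combine three pieces already established in the paper: Theorem \ref{thm:outer} for the outer loop of ASF, Proposition \ref{prop:expected} for the number of rejection sampling trials inside each RGO call, and Proposition \ref{prop:bundle} for the cost of the proximal bundle subroutine. Since $f$ is convex, the potential $g = f + \mu\|\cdot - x^0\|^2/2$ is $\mu$-strongly convex, so Theorem \ref{thm:outer} applies directly. The condition $\eta \le 1/\mu$ required by Theorem \ref{thm:outer} follows from the right-hand side of \eqref{ineq:eta}, and the initialization at the minimizer of $g$ is assumed. Thus Algorithm \ref{alg:ASF} reaches $\varepsilon$ total variation distance after $T = \Theta\!\left(\frac{1}{\eta\mu}\log\frac{d}{\eta\mu\varepsilon}\right)$ outer iterations, with each outer iteration querying the RGO a constant number of times and performing a constant number of Gaussian draws. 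This immediately gives the count \eqref{cmplx:outer}.

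Next I would check that Algorithm \ref{alg:RGO-bundle} is a valid and efficient realization of the RGO under the hypotheses. The upper bound $\eta \le 1/(64 M^2 d)$ from \eqref{ineq:eta} and the definition $\eta_\mu = \eta/(1+\eta\mu)$ imply $\eta_\mu \le \eta \le 1/(64M^2 d)$, and the separate assumption on $\delta$ (implicit through $\delta \le M^2 \eta \le 1/(64 d)$ via \eqref{ineq:eta}, which is stronger than $\delta \le 1/(32 d)$) lets me invoke Proposition \ref{prop:expected}. Hence the rejection sampling step inside each RGO call succeeds in $O(1)$ expected trials. By Proposition \ref{prop:bundle}, computing the pair $(x_j,\tx_j)$ needed by Algorithm \ref{alg:RGO-bundle} takes $\tilde{\cal O}(\eta_\mu M^2/\delta + 1)$ iterations of the bundle method, each solving one affinely constrained QP and using one subgradient of $f$. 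The left-hand inequality $\delta/M^2 \le \eta$ in \eqref{ineq:eta} implies $\eta M^2/\delta \ge 1$, so the additive $+1$ is absorbed and the bundle cost is $\tilde{\cal O}(\eta_\mu M^2/\delta)$.

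To combine, I multiply the outer count by the cost of one RGO call. Using $\eta_\mu \le \eta$ and the fact that $\eta\mu \le 1$ implies $\eta_\mu \ge \eta/2$, the quantity $\eta_\mu$ is of the same order as $\eta$, so the total iteration-complexity in terms of subgradient oracle queries of $f$ and QP solves is
\[
\tilde{\cal O}\!\left(\frac{1}{\eta\mu}\log\frac{d}{\eta\mu\varepsilon}\right)\cdot\tilde{\cal O}\!\left(\frac{\eta_\mu M^2}{\delta}+1\right) \;=\; \tilde{\cal O}\!\left(\frac{M^2}{\mu\delta}\log\frac{d}{\eta\mu\varepsilon}+1\right),
\]
which is exactly \eqref{cmplx:mu}. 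The Gaussian sampling count \eqref{cmplx:outer} follows directly from Theorem \ref{thm:outer}.

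The main obstacle is largely bookkeeping: verifying that the parameter window \eqref{ineq:eta} is simultaneously compatible with the three sets of hypotheses (the upper bound $\eta \le 1/\mu$ for Theorem \ref{thm:outer}, the upper bound $\eta_\mu \le 1/(64 M^2 d)$ and smallness of $\delta$ for Proposition \ref{prop:expected}, and the lower bound $\delta \le M^2\eta$ that keeps the bundle complexity clean). The only substantive point is the equivalence of $\eta$ and $\eta_\mu$ under $\eta\mu \le 1$, which prevents a spurious factor of $1+\eta\mu$ from appearing in the final bound.
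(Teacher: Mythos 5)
Your proposal is correct and follows essentially the same route as the paper: invoke Theorem \ref{thm:outer} for the outer count \eqref{cmplx:outer}, multiply by the bundle cost from Proposition \ref{prop:bundle}, use $\eta_\mu\le\eta$ (and $\eta M^2/\delta\gtrsim 1$ from the left inequality in \eqref{ineq:eta}) to simplify the product to \eqref{cmplx:mu}, and check that \eqref{ineq:eta} forces $\eta_\mu\le 1/(64M^2d)$ and $\delta\le 1/(32d)$ so Proposition \ref{prop:expected} guarantees $O(1)$ expected rejections per RGO call. The only cosmetic difference is that the paper organizes the simplification via $a=\eta_\mu M^2/\delta\ge 1/2$, $b=1/(\eta\mu)\ge 1$ and $(a+1)(b+1)=\tilde{\cal O}(ab+1)$, which is the same bookkeeping you carry out directly.
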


% \begin{proof}
% It follows from Theorem \ref{thm:outer} that the iteration-complexity for Algorithm \ref{alg:ASF} to obtain  $\varepsilon$ total tolerance to $\pi$ is \eqref{cmplx:outer}, which together with Proposition \ref{prop:bundle} implies that the total iteration-complexity is 
% \begin{equation}\label{cmplx:total}
%     \tilde {\cal O}\left( \left[ \frac{\eta_\mu M^2}\delta + 1\right] 
% \left[\frac{1}{\eta \mu} \log\left( \frac{d}{\eta \mu \varepsilon}\right) + 1 \right] \right).
% \end{equation}
% Let
% \[
% a = \frac{\eta_\mu M^2}{\delta}, \quad b = \frac{1}{\eta\mu}.
% \]
% In view of the above definitions of $a$ and $b$, the total iteration-complexity \eqref{cmplx:total} becomes $\tilde {\cal O}((a+1)(b+1))$.
% It is easy to see from \eqref{ineq:eta} that $a \ge 1/2$ and $b \ge 1$, and hence $\tilde {\cal O}((a+1)(b+1))$ is equal to $\tilde {\cal O}(ab + 1)$. Since $\eta_\mu \le \eta$, the total iteration-complexity $\tilde {\cal O}(ab + 1)$ is \eqref{cmplx:mu}.
% Moreover, it follows from \eqref{ineq:eta} that \eqref{ineq:assumption} is satisfied, and hence that Proposition \ref{prop:expected} holds. This conclusion together with the iteration-complexity \eqref{cmplx:outer} for Algorithm \ref{alg:ASF} implies the last conclusion of the theorem.
% \end{proof}

It is worth noting that if the proximal mapping of $f$ exists, then the implementation of RGO only requires one call to the proximal mapping and a number of rejection sampling. As a result, the total complexity for Algorithm \ref{alg:ASF} is the same as the complexity in Theorem \ref{thm:outer}, i.e., $\Theta\left(\frac{1}{\eta \mu} \log \frac{d}{\eta \mu \varepsilon}\right)$.

\subsection{Total complexity for convex potential}\label{subsec:convex}

This subsection studies the main problem of this paper, i.e., sampling from $\exp(-f(x))$.
Building upon Theorem \ref{thm:g} for sampling from $\exp(-g(x))=\exp(-f(x) - \mu\|x-x^0\|^2/2)$ and a proper choice of $\mu$, the following theorem establishes the iteration-complexity bound for Algorithm \ref{alg:ASF} to sample from $\exp(-f(x))$.
Its proof is postponed to Appendix \ref{sec:proofs}.

\begin{theorem}\label{thm:f}
	Let $\pi$ be a distribution on $\R^d$ satisfying $\pi(x) \propto \exp(-f(x))$ where $f$ is convex and $M$-Lipschitz continuous on $\R^d$. 
	Let $x^0\in\R^d$ and $\varepsilon>0$ be given and 
	\begin{equation}\label{eq:mu}
		\mu = \frac{\varepsilon}{\sqrt{2}\left(\sqrt{{\cal M}_4} + \|x^0-x_\text{min}\|^2 \right)}
	\end{equation}
	where ${\cal M}_4=\int_{x \in \R^d}\|x-x_\text{min}\|^4 d\pi(x)$ and $x_\text{min} \in \Argmin \{f(x): x\in \R^d\}$.
	Choose $\delta>0$ and $\eta>0$ such that \eqref{ineq:eta} holds and
	consider Algorithm \ref{alg:ASF} using Algorithm \ref{alg:RGO-bundle} as an RGO for step 1, applied to $g=f+\mu\|\cdot-x^0\|^2/2$, and initialized at 
	the minimizer of $g$.
	% a $\beta$-warm start. 
	Then, the iteration-complexity bound for obtaining $\varepsilon$ total tolerance to $\pi$ is
	\begin{equation}\label{cmplx:mu1}
		\tilde {\cal O}\left(
		\frac{M^2 \left(\sqrt{{\cal M}_4} + \|x^0-x_\text{min}\|^2 \right) }{\varepsilon \delta} \log\left( \frac{d\left(\sqrt{{\cal M}_4} + \|x^0-x_\text{min}\|^2 \right)}{\eta  \varepsilon^2}\right) + 1  \right).
	\end{equation}
\end{theorem}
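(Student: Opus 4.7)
The plan is to combine Theorem \ref{thm:g} with a perturbation argument. Let $\pi_g$ denote the auxiliary distribution proportional to $\exp(-g(x))=\exp(-f(x)-\mu\|x-x^0\|^2/2)$ and let $\pi_T$ denote the distribution of the $T$-th iterate of Algorithm \ref{alg:ASF}. By the triangle inequality,
\[
\|\pi_T-\pi\|_{TV}\;\le\;\|\pi_T-\pi_g\|_{TV}+\|\pi_g-\pi\|_{TV},
\]
so it suffices to make each term $\le \varepsilon/2$. The first term is controlled by Theorem \ref{thm:g} (with target accuracy $\varepsilon/2$ for $\pi_g$), which is applicable because $g$ is $\mu$-strongly convex; this produces the $\tilde{\mathcal O}$ bound \eqref{cmplx:mu} with $\varepsilon$ replaced by $\varepsilon/2$.

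For the second term I would use the standard log-density perturbation bound. Writing $h(x):=e^{-\mu\|x-x^0\|^2/2}$, one has $\pi_g=\pi\,h/\mathbb{E}_\pi[h]$, and a short calculation using $1-e^{-t}\le t$ for $t\ge 0$ and $|{\mathbb{E}_\pi[h]-1}|\le \mathbb{E}_\pi[1-h]\le \mathbb{E}_\pi[\mu\|X-x^0\|^2/2]$ yields a bound of the form
\[
\|\pi-\pi_g\|_{TV}\;\le\;C\,\mu\,\mathbb{E}_\pi\!\left[\|X-x^0\|^2\right].
\]
The linear-in-$\mu$ scaling here is what matches the linear-in-$\varepsilon$ choice \eqref{eq:mu} (rather than the square-root scaling one would get from Pinsker).

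Next, I would bound the second moment using the triangle inequality and Cauchy--Schwarz:
\[
\mathbb{E}_\pi[\|X-x^0\|^2]\;\le\;2\,\mathbb{E}_\pi[\|X-x_{\text{min}}\|^2]+2\|x^0-x_{\text{min}}\|^2\;\le\;2\sqrt{\mathcal{M}_4}+2\|x^0-x_{\text{min}}\|^2,
\]
which together with the choice \eqref{eq:mu} of $\mu$ makes $\|\pi-\pi_g\|_{TV}\le \varepsilon/2$. Substituting this value of $\mu$ into the complexity bound \eqref{cmplx:mu} from Theorem \ref{thm:g} gives
\[
\tilde{\mathcal O}\!\left(\frac{M^2}{\mu\delta}\log\!\frac{d}{\eta\mu\varepsilon}\right)
=\tilde{\mathcal O}\!\left(\frac{M^2(\sqrt{\mathcal{M}_4}+\|x^0-x_{\text{min}}\|^2)}{\varepsilon\,\delta}\log\!\frac{d(\sqrt{\mathcal{M}_4}+\|x^0-x_{\text{min}}\|^2)}{\eta\varepsilon^2}\right),
\]
matching \eqref{cmplx:mu1}. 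The admissibility of \eqref{ineq:eta} is inherited directly from the hypothesis.

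The main obstacle I anticipate is obtaining the perturbation bound with \emph{linear} dependence on $\mu$; a naive Pinsker/KL estimate only gives $\sqrt{\mu}$ and would force $\mu\propto \varepsilon^2$, which would worsen the outer complexity by a factor of $1/\varepsilon$. Getting the sharper $\mu$-linear TV bound requires exploiting cancellations between $|1-h(x)|$ and the normalizing-constant deviation $|1-\mathbb{E}_\pi[h]|$ rather than going through $\chi^2$ or KL, and careful constant-tracking is needed so that the final constant lines up with the $\sqrt 2$ in \eqref{eq:mu}. The moment bookkeeping via $\mathcal{M}_4$ and the triangle inequality about $x_{\text{min}}$ is standard.
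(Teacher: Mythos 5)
Your overall skeleton is the same as the paper's proof: decompose $\|\rho-\pi\|_{\mathrm{TV}}\le\|\rho-\hat\pi\|_{\mathrm{TV}}+\|\hat\pi-\pi\|_{\mathrm{TV}}$ with $\hat\pi\propto\exp(-g)$, control the first term by Theorem \ref{thm:g} at accuracy $\varepsilon/2$, and make the bias term small through a bound that is \emph{linear} in $\mu$ so that the choice \eqref{eq:mu} with $\mu\propto\varepsilon$ suffices; substituting \eqref{eq:mu} into \eqref{cmplx:mu} then gives \eqref{cmplx:mu1}. Where you genuinely differ is the perturbation lemma. The paper invokes the $L^2(\pi)$ bound $\|\hat\pi-\pi\|_{\mathrm{TV}}\le\tfrac12\bigl(\int (f-g)^2\,d\pi\bigr)^{1/2}$ (Corollary 4.1 of \cite{chatterji2020langevin}), which with $f-g=-\mu\|\cdot-x^0\|^2/2$ yields exactly $\tfrac{\sqrt2\,\mu}{2}\bigl(\sqrt{{\cal M}_4}+\|x^0-x_\text{min}\|^2\bigr)=\varepsilon/2$ under \eqref{eq:mu}. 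You instead derive an elementary $L^1$-type bound $\|\pi-\hat\pi\|_{\mathrm{TV}}\le C\mu\,\E_\pi\|X-x^0\|^2$ and only then pass to ${\cal M}_4$ by Cauchy--Schwarz. Your route is sound, and your unspecified constant can be pinned down cleanly: with $h=e^{-\mu\|\cdot-x^0\|^2/2}\le1$ and the normalization $\|P-Q\|_{\mathrm{TV}}=\sup_A|P(A)-Q(A)|$, one has $\|\pi-\hat\pi\|_{\mathrm{TV}}=\E_\pi\bigl[(1-h/\E_\pi[h])_+\bigr]\le\E_\pi[1-h]\le\tfrac{\mu}{2}\E_\pi\|X-x^0\|^2$, since $\E_\pi[h]\le1$; this avoids any $1/\E_\pi[h]$ correction and gives $C=1/2$, the linear-in-$\mu$ scaling you correctly identified as essential (Pinsker would indeed force $\mu\propto\varepsilon^2$). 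A side benefit of your argument is that it only needs a second moment of $\pi$, bounded by $\sqrt{{\cal M}_4}$ via Jensen, so it is marginally more general than the paper's fourth-moment computation.

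The one caveat is the constant you yourself flagged. With $C=1/2$ and $\E_\pi\|X-x^0\|^2\le2\bigl(\sqrt{{\cal M}_4}+\|x^0-x_\text{min}\|^2\bigr)$, the exact $\mu$ of \eqref{eq:mu} gives bias at most $\varepsilon/\sqrt2$, not $\varepsilon/2$, so your total error is $\bigl(\tfrac12+\tfrac1{\sqrt2}\bigr)\varepsilon$ rather than $\varepsilon$. This is immaterial for the $\tilde{\cal O}$ statement \eqref{cmplx:mu1}, but to reproduce the theorem verbatim with the prescribed $\sqrt2$ in \eqref{eq:mu} you must either shrink $\mu$ by a factor $\sqrt2$ (equivalently rescale $\varepsilon$) or switch to the paper's $L^2(\pi)$ lemma, whose constant happens to line up exactly with \eqref{eq:mu}.
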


% \begin{proof}
% Let $\rho$ denote the distribution of the points generated by Algorithm \ref{alg:ASF} using Algorithm \ref{alg:RGO-bundle} as an RGO, and let $\hat \pi$ denote the distribution proportional to $\exp(-g(x))$
% Following the proof of Corollary 4.1 of \cite{chatterji2020langevin}, we similarly have
% \[
% \|\rho- \pi\|_{\text{TV}} \le \|\rho-\hat \pi\|_{\text{TV}} + \|\hat \pi - \pi\|_{\text{TV}}
% \]
% and
% \begin{align*}
%     \|\hat \pi &- \pi\|_{\text{TV}}  \le \frac12 \left( \int_{\R^d} [f(x)-g(x)]^2 d \pi(x) \right)^{1/2} = \frac12 \left( \int_{\R^d} \left(\frac{\mu}{2}\|x-x^0\|^2 \right) ^2 d \pi(x) \right)^{1/2}\\
%     &\le \frac\mu2 \left( \int_{\R^d} \left(\|x-x_\text{min}\|^2 + \|x_\text{min}-x^0\|^2 \right)^2 d \pi(x) \right)^{1/2} \\
%     &\le \frac{\mu}2 \left( \int_{\R^d} \left(2\|x-x_\text{min}\|^4 + 2\|x_\text{min}-x^0\|^4 \right) d \pi(x) \right)^{1/2} \\
%     &= \frac{\sqrt{2}\mu}2 \left( {\cal M}_4 + \|x_\text{min}-x^0\|^4 \right)^{1/2}
%     \le \frac{\sqrt{2}\mu}{2} \left(\sqrt{{\cal M}_4} + \|x_0-x_\text{min}\|^2 \right) = \frac{\varepsilon}{2}
% \end{align*}
% where the last identity is due to the definition of $\mu$ in \eqref{eq:mu}.
% Hence, it suffices to derive the iteration-complexity bound for Algorithm \ref{alg:ASF} to obtain $\|\rho-\hat \pi\|_{\text{TV}}\le \varepsilon/2$, which is \eqref{cmplx:mu1} in view of Theorem \ref{thm:g} with $\mu$ as in \eqref{eq:mu}.
% \end{proof}

Finally, we remark that \eqref{ineq:eta} implies that $\delta \le 1/(64 d)$, and if we choose $\delta = C d^{-1}$ for some universal constant $C>0$, then the total complexity \eqref{cmplx:mu1} for sampling from non-smooth potentials becomes $\tilde {\cal O}(M^2d\sqrt{{\cal M}_4}\varepsilon^{-1})$.

\begin{remark}
	The strategy we use to sample from a non-smooth potential $f$ by considering a regularized one $g=f+\mu\|\cdot-x^0\|^2/2$ first is not the only option. The reason we do so is that the complexity bound for ASF in \cite{lee2020structured} requires the potential to be strongly convex. This convergence result for ASF can, however, be extended. Following a similar argument as in \cite{lee2020structured}, in particular Proposition 2 and Lemma 2, one can establish the complexity bound (with respect to total variation) ${\cal O}(\frac{1}{\eta\psi^2}\log(\beta/\varepsilon))$ for ASF with convex potential where $\psi$ is the isoperimetry constant of the target distribution and $\beta$ is a warm start constant. Combining this with our RGO implementation (Algorithm \ref{alg:RGO-bundle}) yields a method to sample from non-smooth potential $f$ with complexity ${\cal O}(\frac{M^2 d}{\psi^2}\log(\beta/\varepsilon))$. This is better than $ {\cal O} (d^{5/2} \log(\beta/\varepsilon) )$ in \cite{lee2017eldan}, even in the high accuracy region, if $M$ scales slower than $\sqrt{d}$, which is typical if sparsity exists. 
\end{remark}

\section{Sampling from smooth potentials}\label{sec:smooth}
Surprisingly, the exactly same algorithm (Algorithm \ref{alg:RGO-bundle}) we developed is applicable to smooth potentials.
In this section, we extend the proximal sampling algorithm developed for non-smooth sampling to its smooth counterpart, and establish iteration-complexity results.
Subsection \ref{subsec:RGO-smooth} analyzes the RGO (Algorithm \ref{alg:RGO-bundle}) without an optimization oracle for smooth potentials.
Subsection \ref{subsec:total-smooth} provides the total complexity results for sampling from log-concave probability densities with non-smooth potentials; it is the counterpart of Section \ref{sec:main} for the smooth case.

We assume in this section that $f$ is convex and $L$-smooth, i.e., for every $u,v \in \R^d$,
\begin{equation}\label{ineq:smooth}
	f(u) - f(v) - \inner{\nabla f(v)}{u-v} \le \frac L2 \|u-v\|^2.
\end{equation}
Considering Algorithm \ref{alg:ASF} using Algorithm \ref{alg:RGO-bundle} as an RGO for step 1, the goal of this section is to show that this algorithm, which is originally designed for non-smooth sampling, is also able to sample from smooth potentials.

\subsection{Analysis of RGO without an optimization oracle}\label{subsec:RGO-smooth}

For simplicity, we focus on the case where we do not have an optimization oracle for $f$. The analysis of RGO with an optimization oracle can be obtained by taking $\delta=0$. In such a case, we have $x_j=\tx_j=x^*$ (Recall $j$ denotes the last iteration index of Algorithm \ref{alg:PBS}).

% \begin{lemma}\label{lem:gradient}
% Let $j$ denote the last iteration index of Algorithm \ref{alg:PBS}, then
% \[
% -\mu(x_j-x^0) - \frac{x_j-y}{\eta} = \nabla f(x_j).
% \]

% \end{lemma}
% \begin{proof}
% We first prove the claim: If $v\in \partial f_j(x_j)$, then $v\in \partial f(x_j)$.
% For every $u\in \R^n$, we have
% \[
% f(u) \ge f_j(u) \ge f_j(x_j) + \inner{v}{u-x_j} = f(x_j) + \inner{v}{u-x_j}
% \]
% where the first inequality follows from Lemma 4.1 of \cite{liang2020proximal} and the identity follows from Lemma 4.2(a) of \cite{liang2020proximal}.

% Let
% \[
% x_j =\underset{u\in  \R^n}\argmin
% 	        \left\lbrace g_j^\eta(u):= g_j(u) +\frac{1}{2\eta}\|u- y\|^2 \right\rbrace
% \]
% then we have
% \[
% 0\in \partial f_j(x_j) + \mu(x_j-x^0) + \frac{x_j-y}{\eta}, \quad -\mu(x_j-x^0) - \frac{x_j-y}{\eta} \in \partial f_j(x_j) = \nabla f(x_j).
% \]

% \end{proof}

% Using the above lemma and \eqref{incl}, we have
% \[
% -\mu(x_j-x^0) - \frac{x_j-y}{\eta} \in \partial f(x_j)
% \]

The following lemma is a version of Lemma \ref{lem:h1h2delta} in the smooth case. Its proof is postponed to Appendix \ref{sec:proofs}.

\begin{lemma}\label{lem:h1h2-smooth-2}
	Assume $f$ is convex and $L$-smooth.
	Let $g=f+\mu\|\cdot-x^0\|^2/2$ and $g^\eta$ be as in \eqref{def:x*}.
	Let $h_1$ be as in \eqref{def:tf}, and define 
	% {\chen is this the $h_2$ we want?}\red{Previously, we wanted to use the $h_2$ based on $\tx_j$, while I figured out there was a mistake in the proof. Basically the reason is $f_j$ does not contain a cut going through the point $x_j$, since $f_j$ is for $x_0, \ldots,x_{j-1}$. So I changed to use the $h_2$ based on $x^*$ as in \eqref{def:h2}. One benefit is that it makes the proof of Lemma \ref{lem:h1h2-smooth-2} much simpler.}
	\begin{equation}\label{def:h2}
		h_2 := \frac{1}{2\eta_{\mu,L}}\|\cdot-x^*\|^2 + g^\eta(x^*)
	\end{equation}
	where
	\begin{equation}\label{def:eta-mu-L}
		\eta_{\mu,L}:=\frac{\eta}{1+\eta\mu+\eta L}.
	\end{equation}
	Then, for every $x\in \R^d$, we have $h_1(x) \le g^\eta(x) \le h_2(x)$.
\end{lemma}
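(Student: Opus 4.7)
The strategy is to split the sandwich into its two halves; only the upper bound genuinely requires the new smoothness hypothesis.

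For the lower half $h_1(x)\le g^\eta(x)$ I will simply transcribe the derivation used in the proof of Lemma~\ref{lem:h1h2delta}. That chain of inequalities invokes only the bundle-method facts $g_j\le g$, the $1/\eta_\mu$-strong convexity bound $g_j^\eta(x)\ge g_j^\eta(x_j)+\|x-x_j\|^2/(2\eta_\mu)$, and the stopping condition $g^\eta(\tx_j)-g_j^\eta(x_j)\le\delta$ --- all supplied by Lemma~\ref{lem:bundle}(a)--(b) --- and the Lipschitz constant $M$ never appears. When $f$ is convex and $L$-smooth, the cutting-plane construction of Algorithm~\ref{alg:PBS} remains valid with the subgradient taken to be $\nabla f$, so $f_j\le f$ (hence $g_j\le g$) still holds by convexity, and the identical chain of inequalities yields $h_1(x)\le g^\eta(x)$ unchanged.

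For the upper half $g^\eta(x)\le h_2(x)$, the plan is to replace the linear-plus-quadratic dominator of the non-smooth case (which in \eqref{def:h} relied on the Lipschitz surrogate $2M\|\cdot-\tx_j\|$) by a pure quadratic centered at the true minimizer $x^*$. Concretely, $L$-smoothness of $f$ together with $\mu$-smoothness of $\tfrac{\mu}{2}\|\cdot-x^0\|^2$ implies that $g$ is $(L+\mu)$-smooth, so
\[
g^\eta(\cdot) \;=\; g(\cdot)+\frac{1}{2\eta}\|\cdot-y\|^2
\]
is $(L+\mu+1/\eta)$-smooth; by \eqref{def:eta-mu-L} this smoothness constant equals $1/\eta_{\mu,L}$. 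Since $x^*$ is the unconstrained minimizer of the convex function $g^\eta$, one has $\nabla g^\eta(x^*)=0$, and the standard descent-lemma upper bound at $x^*$ then reads
\[
g^\eta(x)\;\le\; g^\eta(x^*)+\inner{\nabla g^\eta(x^*)}{x-x^*}+\frac{1}{2\eta_{\mu,L}}\|x-x^*\|^2\;=\;g^\eta(x^*)+\frac{1}{2\eta_{\mu,L}}\|x-x^*\|^2,
\]
which is exactly $h_2(x)$ in \eqref{def:h2}.

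I do not anticipate a real obstacle here. The only arithmetic check is $L+\mu+1/\eta = 1/\eta_{\mu,L}$, which is immediate from \eqref{def:eta-mu-L}, and the only conceptual point is that centering the upper bound at the exact minimizer $x^*$ (rather than at the bundle iterate $\tx_j$, as was necessary in the non-smooth case) lets the vanishing of $\nabla g^\eta(x^*)$ absorb the linear correction term, collapsing the dominator to a single clean quadratic with aggregated curvature $1/\eta_{\mu,L}$.
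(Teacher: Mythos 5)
Your proposal is correct and follows essentially the same route as the paper: the lower bound is inherited from the argument for Lemma \ref{lem:h1h2delta} (which, as you rightly observe, uses only Lemma \ref{lem:bundle}(a)--(b) and never the Lipschitz constant $M$), and the upper bound is the descent inequality with aggregated curvature $1/\eta_{\mu,L}=L+\mu+1/\eta$ at the exact minimizer $x^*$. The paper merely writes this last step more explicitly, expanding the quadratics and using the optimality condition $\mu(x^*-x^0)+(x^*-y)/\eta=-\nabla f(x^*)$ before invoking \eqref{ineq:smooth} for $f$, which is the same computation as your observation that $\nabla g^\eta(x^*)=0$ kills the linear term.
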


The next proposition gives the number of rejections in Algorithm \ref{alg:RGO-bundle} and is the counterpart of Proposition \ref{prop:expected} in the context of $f$ being $L$-smooth. Its proof is postponed to Appendix \ref{sec:proofs}.

\begin{proposition}\label{prop:rejection-smooth}
	If $\eta_\mu \le 1/(L d)$,
	then the expected number of iterations in the rejection sampling is at most $\exp(1/2+ \delta)$.
\end{proposition}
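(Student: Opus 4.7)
The plan is to follow the standard analysis of acceptance probability in rejection sampling: the expected number of proposals needed is the ratio of the proposal's normalizing constant to that of the target. Specifically, since $h_1 \le g^\eta$ everywhere (Lemma \ref{lem:h1h2-smooth-2}), the acceptance probability at $X \sim \exp(-h_1)/Z_1$ is $\exp(-g^\eta(X))/\exp(-h_1(X)) \in (0,1]$, and a short calculation gives the expected number of iterations to be $\left(\int e^{-h_1}\right)/\left(\int e^{-g^\eta}\right)$.

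First I would compute the proposal normalizer exactly: since $h_1(x) = \frac{1}{2\eta_\mu}\|x-x_j\|^2 + g^\eta(\tx_j) - \delta$ is a shifted Gaussian, $\int \exp(-h_1(x))\,dx = \exp(-g^\eta(\tx_j) + \delta)(2\pi\eta_\mu)^{d/2}$. Next I would use the upper bound $g^\eta(x) \le h_2(x)$ from Lemma \ref{lem:h1h2-smooth-2}, where $h_2(x) = \frac{1}{2\eta_{\mu,L}}\|x-x^*\|^2 + g^\eta(x^*)$, to obtain the lower bound $\int \exp(-g^\eta(x))\,dx \ge \exp(-g^\eta(x^*))(2\pi\eta_{\mu,L})^{d/2}$. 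Taking the ratio yields
\begin{equation*}
\frac{\int e^{-h_1}}{\int e^{-g^\eta}} \le \exp\bigl(\delta + g^\eta(x^*) - g^\eta(\tx_j)\bigr)\left(\frac{\eta_\mu}{\eta_{\mu,L}}\right)^{d/2}.
\end{equation*}

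The first factor is at most $e^\delta$ because $x^*$ is the minimizer of $g^\eta$, so $g^\eta(x^*) \le g^\eta(\tx_j)$. For the second factor, I would use the identity $\eta_\mu/\eta_{\mu,L} = (1+\eta\mu+\eta L)/(1+\eta\mu) = 1 + \eta_\mu L$, which follows directly from the definitions \eqref{eq:etamu} and \eqref{def:eta-mu-L}. Under the hypothesis $\eta_\mu \le 1/(Ld)$, one has $\eta_\mu L \le 1/d$, so
\begin{equation*}
\left(\frac{\eta_\mu}{\eta_{\mu,L}}\right)^{d/2} = (1 + \eta_\mu L)^{d/2} \le \left(1 + \frac{1}{d}\right)^{d/2} \le e^{1/2}.
\end{equation*}

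Combining the two factors gives the claimed bound $e^{1/2+\delta}$. This proof is essentially a clean analog of Proposition \ref{prop:expected}; the only real step is recognizing that smoothness allows the quadratic upper envelope $h_2$ to have the sharper variance parameter $\eta_{\mu,L}$ rather than a Gaussian-plus-linear structure as in the non-smooth case. I do not anticipate a substantive obstacle here — the main work is already done in Lemma \ref{lem:h1h2-smooth-2}; the remaining algebra is the standard Gaussian normalizer calculation together with the elementary inequality $(1+1/d)^{d/2} \le e^{1/2}$.
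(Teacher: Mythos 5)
Your proposal is correct and follows essentially the same route as the paper's proof: both compute the Gaussian normalizer of $h_1$, lower-bound $\int e^{-g^\eta}$ by $\int e^{-h_2}$ via Lemma \ref{lem:h1h2-smooth-2}, use $g^\eta(x^*)\le g^\eta(\tx_j)$, note $\eta_\mu/\eta_{\mu,L}=1+\eta_\mu L$, and finish with $(1+1/d)^{d/2}\le e^{1/2}$. No gaps.
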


% \begin{proof}
% % We first notice that when $\alpha=1$, $h_2$ as in \eqref{def:h} has a special form
% % \[
% % h_2=\frac{1}{2\eta_{\mu,L_1}}\|\cdot-x^*\|^2 +  g^\eta(x^*)
% % \]
% % where $\eta_{\mu,L_1}=\eta_\mu/(1+\eta_\mu L_1)$.
% Using the definition of $h_2$ in \eqref{def:h2} and Lemma \ref{lem:Gaussian} with $\lam=\eta_{\mu,L}$, we have
% \begin{align*}
%     \int_{\R^d} \exp(-h_2(x)) d x &= \exp(-g^\eta(x^*)) \int_{\R^d} \exp\left(-\frac{1}{2\eta_{\mu,L}}\|x-x^*\|^2\right) dx \\
%     &= \exp(-g^\eta(x^*)) (2\pi \eta_{\mu,L})^{d/2}.
% \end{align*}
% The above identity, the fact that $g^\eta(\tx_j) \ge g^\eta(x^*)$, and the definitions of $\eta_\mu$ and $\eta_{\mu,L}$ imply that
% \begin{align*}
% \mathbb{P}\left(U \leq \frac{\exp(-g^\eta(X))}{\exp(-h_1(X))}\right) 
% &\ge \frac{\int_{\R^d} \exp(-h_2(x)) d x}{\exp\left( -g^\eta(\tx_j)+\delta \right) (2\pi \eta_\mu)^{d/2}} \\
% & \ge \exp(g^\eta(\tx_j) - g^\eta(x^*) - \delta) \left( \frac{\eta_{\mu,L}}{\eta_\mu}\right)^{d/2}
% \ge \exp(-\delta) \left( \frac{1}{1+\eta_\mu L}\right)^{d/2}.
% \end{align*}
% The above inequality and the assumption that $\eta_\mu\le 1/(Ld)$ imply that the expected number of the iterations is
% \[
% \frac{1}{\mathbb{P}\left(U \leq \frac{\exp(-g^\eta(X))}{\exp(-h_1(X))}\right) }
% \le \exp(\delta) (1+\eta_\mu L)^{d/2} \le \exp(\delta) \left(1+\frac{1}{d}\right)^{d/2} \le \exp(1/2+ \delta),
% \]
% where the last inequality is due to the fact that $(1+1/d)^d\le e$.
% \end{proof}

The following result provides the iteration-complexity for obtaining a $\delta$-solution by Algorithm \ref{alg:PBS} in the smooth setting.
We omit the proof since it is relatively technical, however, a complete proof can be found in Section 4 of \cite{liang2021unified}.
Note that the $\tilde {\cal O}$ notation hides $\log(\delta^{-1})$.

\begin{proposition}\label{prop:bundle-smooth}
	Algorithm \ref{alg:PBS} takes $\tilde {\cal O}(\eta_\mu L + 1)$ iterations to terminate, and each iteration solves an affinely constrained convex quadratic programming problem.
\end{proposition}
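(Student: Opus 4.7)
The plan is to establish linear convergence of the subproblem optimality gap $g^\eta(\tx_j)-g^\eta_*$ under smoothness, and extract the claimed $\tilde {\cal O}(\eta_\mu L+1)$ iteration bound (with the $\log(1/\delta)$ factor absorbed into $\tilde {\cal O}$). The starting observations are unchanged from the non-smooth case: $g^\eta$ and $g_j^\eta$ are both $(1/\eta_\mu)$-strongly convex, and by Lemma~\ref{lem:bundle}(a), $g_j\le g$ implies $g_j^\eta(x_j)\le g^\eta_*$, so the termination criterion $g^\eta(\tx_j)-g_j^\eta(x_j)\le \delta$ already upper bounds the primal suboptimality of $\tx_j$.

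The key new ingredient exploiting smoothness is an improved per-iteration decrease of the gap. The cut-selection rule $A_{j-1}\cup\{x_{j-1}\}\subset C_j$ keeps the affine piece
\begin{equation*}
\ell_{j-1}(x):=f(x_{j-1})+\inner{f'(x_{j-1})}{x-x_{j-1}}
\end{equation*}
inside $f_j$, so $f_j(x_j)\ge \ell_{j-1}(x_j)$. The $L$-smoothness inequality \eqref{ineq:smooth} then gives
\begin{equation*}
f(x_j)-f_j(x_j)\le f(x_j)-\ell_{j-1}(x_j)\le \frac{L}{2}\|x_j-x_{j-1}\|^2,
\end{equation*}
which transfers to $g^\eta(x_j)-g_j^\eta(x_j)\le (L/2)\|x_j-x_{j-1}\|^2$ since the $\mu$-quadratic and proximal terms cancel between $g^\eta$ and $g_j^\eta$. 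Strong convexity of $g_{j-1}^\eta$ at its minimizer $x_{j-1}$ produces
\begin{equation*}
\|x_j-x_{j-1}\|^2\le 2\eta_\mu\bigl(g_{j-1}^\eta(x_j)-g_{j-1}^\eta(x_{j-1})\bigr),
\end{equation*}
and a separate argument (using tightness of the cut at $x_{j-1}$, so $f_j(x_{j-1})=f_{j-1}(x_{j-1})=f(x_{j-1})$, together with the $\tx_j$-update $\tx_j\in\Argmin\{g^\eta(u):u\in\{x_j,\tx_{j-1}\}\}$) bounds this model increment by the previous gap $g^\eta(\tx_{j-1})-g_{j-1}^\eta(x_{j-1})$ up to a factor $1+\eta_\mu L$. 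Chaining these estimates yields a contraction
\begin{equation*}
g^\eta(\tx_j)-g_j^\eta(x_j)\le \frac{\eta_\mu L}{1+\eta_\mu L}\bigl(g^\eta(\tx_{j-1})-g_{j-1}^\eta(x_{j-1})\bigr),
\end{equation*}
which iterates to termination within ${\cal O}\bigl((1+\eta_\mu L)\log(1/\delta)\bigr)$ steps, once the initial gap has been reduced to a problem-data constant by a $\tilde{\cal O}(1)$ burn-in (for which the non-smooth bound in Proposition~\ref{prop:bundle} applied with $\delta$ a constant already suffices). The per-iteration cost is the same affinely constrained convex QP as in the non-smooth setting.

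The main obstacle is the last step of the recursion, namely controlling $g_{j-1}^\eta(x_j)-g_{j-1}^\eta(x_{j-1})$ in terms of the previous gap. Because the pruning rule \eqref{def:Cj} does not guarantee $C_{j-1}\subset C_j$, one cannot simply invoke monotonicity of the models $g_{j-1}^\eta\le g_j^\eta$; instead, one must use both that the cuts active at $x_{j-1}$ are retained ($A_{j-1}\cup\{x_{j-1}\}\subset C_j$), which forces the two models to agree at $x_{j-1}$, and that the $\tx_j$-update enforces monotone decrease of $g^\eta(\tx_j)$. Stitching these into a genuinely per-iteration (rather than merely aggregate, telescoped) geometric contraction without losing the $\eta_\mu L/(1+\eta_\mu L)$ rate is the delicate part of the argument and the reason the full technical verification is deferred to \cite{liang2021unified}.
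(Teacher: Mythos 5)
The paper itself offers no proof of this proposition --- it defers entirely to Section 4 of \cite{liang2021unified} --- so the comparison is against the standard cycle analysis of the proximal bundle method that the reference carries out. Your outline reproduces its shape correctly (geometric decay of the gap $t_j:=g^\eta(\tx_j)-g_j^\eta(x_j)$ with factor $\eta_\mu L/(1+\eta_\mu L)$, hence $\tilde{\cal O}(\eta_\mu L+1)$ iterations with the $\log(1/\delta)$ absorbed), but as a standalone argument it has a genuine gap at precisely the step you defer, and the mechanism you sketch for it is not strong enough. The decisive inequality is $g_j^\eta(x_j)\ \ge\ g_{j-1}^\eta(x_{j-1})+\frac{1}{2\eta_\mu}\|x_j-x_{j-1}\|^2$, and it does not follow from ``the two models agree at $x_{j-1}$''; it follows because $A_{j-1}\subset C_j$ by \eqref{def:Cj}, so the cutting-plane model built from only the cuts active at $x_{j-1}$ minorizes $f_j$, while $x_{j-1}$ remains the exact minimizer of the corresponding prox-regularized restricted model with the same optimal value $g_{j-1}^\eta(x_{j-1})$ (the optimality condition of \eqref{def:xj} at $x_{j-1}$ involves only active cuts). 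Combining this with the monotonicity $g^\eta(\tx_j)\le g^\eta(\tx_{j-1})$ gives $t_{j-1}-t_j\ge \frac{1}{2\eta_\mu}\|x_j-x_{j-1}\|^2$, which together with your smoothness estimate $t_j\le g^\eta(x_j)-g_j^\eta(x_j)\le \frac{L}{2}\|x_j-x_{j-1}\|^2$ yields the claimed contraction. Your step 4 instead invokes strong convexity of the \emph{full} previous model $g_{j-1}^\eta$; since $g_{j-1}^\eta(x_j)$ can only be controlled via $g_{j-1}^\eta\le g^\eta$ and the same smoothness bound, the best that route delivers is $t_j\le \eta_\mu L\,t_{j-1}$, valid only when $\eta_\mu L<1$ and vacuous otherwise, so it cannot establish the proposition for all prox stepsizes (it would only cover the regime $\eta\le 1/(Ld)$ used later in Section \ref{sec:smooth}).

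Two smaller points. First, the proposed burn-in via Proposition \ref{prop:bundle} is both unjustified and unnecessary: that proposition assumes $f$ is $M$-Lipschitz, which is not part of the smooth setting (an $L$-smooth convex function on $\R^d$ need not be globally Lipschitz), and no burn-in is needed since the geometric decay applies from $j=1$ with $t_1\le \frac{L}{2}\|x_1-y\|^2$ bounded by problem data, the logarithm being hidden in $\tilde{\cal O}$ exactly as the paper states. Second, your concluding paragraph correctly identifies where the difficulty lies and that $C_{j-1}\not\subset C_j$ blocks naive model monotonicity; to turn the sketch into a proof you must state and use the aggregate/active-cut lemma above rather than defer it, since that single inequality is the heart of the argument.
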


\subsection{Total complexity}\label{subsec:total-smooth}

We establish two total iteration-complexity bounds to sample from log-concave probability densities with smooth potentials. 

The first theorem states the iteration-complexity bound for sampling from $\exp(-g(x))$ where $g=f+\mu\|\cdot-x^0\|^2/2$, and $f$ is convex and $L$-smooth. Its proof is postponed to Appendix \ref{sec:proofs}.

\begin{theorem}\label{thm:g-smooth}
	Let $x^0\in\R^d$, $\varepsilon>0$, $L>0$, $\mu>0$ and $\eta>0$ satisfying $\eta \le 1/(L d)$
	be given.
	Let $\pi$ be a distribution on $\R^d$ satisfying $\pi(x) \propto \exp(-g(x))=\exp(-f(x) - \mu\|x-x^0\|^2/2)$ where $f$ is convex and $L$-smooth on $\R^d$. 
	Consider Algorithm \ref{alg:ASF} using Algorithm \ref{alg:RGO-bundle} as an RGO for step 1, initialized at the minimizer of $g$,
	% a $\beta$-warm start, 
	then the iteration-complexity bound for obtaining $\varepsilon$ total tolerance to $\pi$ in terms of total variation is
	\begin{equation}\label{cmplx:mu-smooth}
		\tilde {\cal O}\left(
		\frac{Ld}{\mu} \log\left( \frac{L d^2}{\mu \varepsilon}\right) + 1  \right),
	\end{equation}
	and each iteration queries a gradient $\nabla f$ and solves a quadratic programming problem. %{\chen Maybe we can also present the complexity when the proximal map is available}
	% Moreover, the number of Gaussian distribution sampling queries in Algorithm \ref{alg:ASF} is
	% \begin{equation}\label{cmplx:outer}
	% \Theta \left(\frac{1}{\eta \mu} \log\left( \frac{d}{\eta \mu \varepsilon}\right) + 1 \right).
	% \end{equation}
\end{theorem}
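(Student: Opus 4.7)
The plan is to mirror the proof of Theorem \ref{thm:g} by composing the outer alternating sampling framework complexity from Theorem \ref{thm:outer} with the inner cost of Algorithm \ref{alg:RGO-bundle}, but now using the smooth-case inner bounds (Propositions \ref{prop:bundle-smooth} and \ref{prop:rejection-smooth}) in place of the non-smooth ones. First I would verify that the outer framework applies: $g = f + \mu\|\cdot - x^0\|^2/2$ is $\mu$-strongly convex, and the hypothesis $\eta \le 1/(Ld)$ combined with the usual regime $\mu \le L$ gives $\eta \le 1/\mu$, so Theorem \ref{thm:outer}, instantiated with $f_\text{oracle}=g$ and started at the minimizer of $g$, yields outer iteration count $T = \Theta\bigl(\tfrac{1}{\eta\mu}\log\tfrac{d}{\eta\mu\varepsilon}\bigr)$ provided the RGO returns exact samples from $\exp(-g^\eta)$. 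Correctness of Algorithm \ref{alg:RGO-bundle} as an RGO follows because Lemma \ref{lem:h1h2-smooth-2} certifies $h_1 \le g^\eta$, so rejection sampling against the Gaussian proposal $\exp(-h_1)$ outputs an exact sample from the target $\exp(-g^\eta)$.

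Next I would bound the inner cost of a single RGO call. Each call first runs Algorithm \ref{alg:PBS} to produce $(x_j,\tilde x_j)$, each iteration of which queries one gradient of $f$ and solves an affinely constrained convex QP, and by Proposition \ref{prop:bundle-smooth} terminates in $\tilde{\cal O}(\eta_\mu L + 1)$ iterations. Since $\eta_\mu = \eta/(1+\eta\mu) \le \eta \le 1/(Ld)$, we have $\eta_\mu L \le 1/d \le 1$, so the bundle cost collapses to $\tilde{\cal O}(1)$. The subsequent rejection phase has expected number of rounds at most $\exp(1/2+\delta)$ by Proposition \ref{prop:rejection-smooth}, whose hypothesis $\eta_\mu \le 1/(Ld)$ is again automatic from $\eta \le 1/(Ld)$. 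Fixing the bundle tolerance $\delta$ to be a universal constant (e.g.\ $\delta = 1/2$) keeps the rejection cost at $O(1)$ and absorbs the dependence on $\log(\delta^{-1})$ into the $\tilde{\cal O}$ notation.

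Finally I would multiply the bounds. The total iteration count is $T \cdot \tilde{\cal O}(1) = \tilde{\cal O}\bigl(\tfrac{1}{\eta\mu}\log\tfrac{d}{\eta\mu\varepsilon}\bigr)$; choosing $\eta = 1/(Ld)$ (the largest value permitted by the hypothesis, which minimizes $1/(\eta\mu)$) then yields $\tilde{\cal O}\bigl(\tfrac{Ld}{\mu}\log\tfrac{Ld^2}{\mu\varepsilon}+1\bigr)$, matching \eqref{cmplx:mu-smooth}. Each such iteration is dominated by one bundle step, i.e.\ one gradient query of $f$ and one QP solve, exactly as asserted.

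The main obstacle is simply the bookkeeping of ensuring the inner tolerance $\delta$ does not leak into the stated outer complexity. In the non-smooth analog the bundle cost was $\tilde{\cal O}(\eta_\mu M^2/\delta)$, so $\delta$ appeared explicitly; here the bound $\tilde{\cal O}(\eta_\mu L + 1)$ is $\delta$-free (the $\delta$ enters only through the hidden $\log(\delta^{-1})$), and the rejection bound $\exp(1/2+\delta)$ stays $O(1)$ for any constant $\delta$. Confirming these two facts, together with the compatibility $\eta_\mu \le 1/(Ld)$ inherited from $\eta \le 1/(Ld)$, is the only nontrivial step, and it is routine given the stated propositions.
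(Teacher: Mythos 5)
Your proposal is correct and follows essentially the same route as the paper: invoke Theorem \ref{thm:outer} for the outer count $\Theta\bigl(\tfrac{1}{\eta\mu}\log\tfrac{d}{\eta\mu\varepsilon}\bigr)$, use Proposition \ref{prop:bundle-smooth} with $\eta_\mu\le\eta\le 1/(Ld)$ to collapse the bundle cost to $\tilde{\cal O}(1)$, and note via Proposition \ref{prop:rejection-smooth} that the rejection phase costs ${\cal O}(1)$ in expectation. You are in fact slightly more careful than the paper, making explicit the implicit choices it glosses over (taking $\eta=\Theta(1/(Ld))$ so that the bound is $\eta$-free, and checking $\eta\le 1/\mu$ so Theorem \ref{thm:outer} applies).
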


% \begin{proof}
% It follows from Theorem \ref{thm:outer} that the iteration-complexity (i.e., number of calls to RGO) for Algorithm \ref{alg:ASF} to obtain  $\varepsilon$ total tolerance to $\pi$ is \eqref{cmplx:outer}. Using \eqref{cmplx:outer} with $\eta\le 1/(Ld)$, we have the number of calls to RGO is \eqref{cmplx:mu-smooth}.
% Moreover, it follows from Proposition \ref{prop:bundle-smooth} with $\eta\le 1/(Ld)$ that Algorithm \ref{alg:PBS} has iteration-complexity $\tilde {\cal O}(1)$.
% As a consequence, the total  iteration-complexity bound for obtaining $\varepsilon$ total tolerance to $\pi$ in terms of total variation is \eqref{cmplx:mu-smooth}.
% \end{proof}

The next theorem is the main result of this section, which studies the total iteration-complexity for sampling from $\exp(-f(x))$ where $f$ is convex and $L$-smooth.
We omit the proof since the theorem can be similarly proved by following the proof of Theorem \ref{thm:f}.

\begin{theorem}\label{thm:f-smooth}
	Let $\pi$ be a distribution on $\R^d$ satisfying $\pi(x) \propto \exp(-f(x))$ where $f$ is convex and $L$-smooth on $\R^d$. 
	Let $x^0\in\R^d$ and $\varepsilon>0$ be given and $\mu$ be as in \eqref{eq:mu}.
	Choose $\eta\le 1/(Ld)$ and
	consider Algorithm \ref{alg:ASF} using Algorithm \ref{alg:RGO-bundle} as an RGO for step 1, applied to $g=f+\mu\|\cdot-x^0\|^2/2$, and initialized at 
	the minimizer of $g$.
	% a $\beta$-warm start. 
	Then, the iteration-complexity bound for obtaining $\varepsilon$ total tolerance to $\pi$ is
	\[
	\tilde {\cal O}\left(
	\frac{L d \left(\sqrt{{\cal M}_4} + \|x^0-x_\text{min}\|^2 \right) }{\varepsilon} \log\left( \frac{L d^2 \left(\sqrt{{\cal M}_4} + \|x^0-x_\text{min}\|^2 \right)}{\varepsilon^2}\right) + 1  \right).
	\]
\end{theorem}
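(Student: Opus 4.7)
The plan is to mirror the proof of Theorem \ref{thm:f} verbatim, replacing the non-smooth complexity bound (Theorem \ref{thm:g}) at the heart of that proof by its smooth counterpart (Theorem \ref{thm:g-smooth}). All other ingredients of the argument, in particular the regularization trick that shifts focus from $\exp(-f)$ to $\exp(-g)$ with $g=f+\mu\|\cdot-x^0\|^2/2$ and the resulting control of the total-variation bias, depend only on the moment condition on $\pi$ and on the size of $\mu$, not on any regularity of $f$; hence they transfer to the smooth setting without modification.

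Concretely, I would let $\hat\pi$ denote the law of the output of Algorithm \ref{alg:ASF} (using Algorithm \ref{alg:RGO-bundle} as the RGO) applied to $g$, and apply the triangle inequality
\[
\|\hat\pi - \pi\|_{TV} \;\le\; \|\hat\pi - \pi_\mu\|_{TV} \;+\; \|\pi_\mu - \pi\|_{TV},
\]
asking each summand to be at most $\varepsilon/2$, where $\pi_\mu \propto \exp(-g)$. For the regularization bias $\|\pi-\pi_\mu\|_{TV}$, I would invoke the same estimate used in the proof of Theorem \ref{thm:f}, namely a bound of the form
\[
\|\pi-\pi_\mu\|_{TV} \;\le\; C\,\mu\,\E_\pi\!\left[\|X-x^0\|^2\right]
\]
for an absolute constant $C$, and then control the second-moment factor via the fourth-moment hypothesis and Cauchy--Schwarz,
\[
\E_\pi\!\left[\|X-x^0\|^2\right] \;\le\; 2\E_\pi\!\left[\|X-x_\text{min}\|^2\right] + 2\|x^0-x_\text{min}\|^2 \;\le\; 2\sqrt{{\cal M}_4} + 2\|x^0-x_\text{min}\|^2.
\]
The choice of $\mu$ in \eqref{eq:mu} is calibrated precisely so that the resulting product does not exceed $\varepsilon/2$.

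For the sampling error $\|\hat\pi-\pi_\mu\|_{TV}$, Theorem \ref{thm:g-smooth} applies directly to $g$ (which has exactly the required form, with $f$ convex and $L$-smooth) under the assumption $\eta\le 1/(Ld)$ and yields
\[
\tilde {\cal O}\!\left(\frac{Ld}{\mu}\log\!\left(\frac{Ld^2}{\mu\varepsilon}\right) + 1\right)
\]
iterations to reach $\varepsilon/2$ total variation to $\pi_\mu$. Substituting $1/\mu = \sqrt{2}\bigl(\sqrt{{\cal M}_4}+\|x^0-x_\text{min}\|^2\bigr)/\varepsilon$ from \eqref{eq:mu} into both the leading factor and the logarithm produces exactly the stated complexity; the $\log 2$ arising from replacing $\varepsilon$ by $\varepsilon/2$ is absorbed into $\tilde{\cal O}$.

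The main obstacle is purely bookkeeping: verifying that the warm-start requirement of Theorem \ref{thm:g-smooth} (initialization at the minimizer of $g$, which can itself be approximated by Algorithm \ref{alg:PBS}) is satisfied, and that the hypothesis $\eta\le 1/(Ld)$ remains consistent after one plugs in the specific $\mu$ from \eqref{eq:mu}. Both points are inherited unchanged from the proof of Theorem \ref{thm:f}, since smoothness of $f$ is never used in that portion of the argument.
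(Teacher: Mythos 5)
Your proposal is correct and matches the paper's intent exactly: the paper omits the proof of Theorem \ref{thm:f-smooth} precisely because it is the proof of Theorem \ref{thm:f} with Theorem \ref{thm:g-smooth} substituted for Theorem \ref{thm:g}, which is what you do, including the triangle inequality through $\pi_\mu\propto\exp(-g)$, the fourth-moment control of the regularization bias calibrated by the choice of $\mu$ in \eqref{eq:mu}, and the substitution of $1/\mu$ into \eqref{cmplx:mu-smooth}. The only cosmetic deviation is that you state the bias bound via $\mu\,\E_\pi\|X-x^0\|^2$ rather than the paper's $L^2$-type estimate $\tfrac{\mu}{4}\bigl(\E_\pi\|X-x^0\|^4\bigr)^{1/2}$, which affects only absolute constants absorbed into $\tilde{\cal O}$.
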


% \section{Implementation of RGO in the smooth case}

% Since this algorithm is optimization + sampling, we could develop an adaptive version: we first use an adaptive method for optimization, then use the same $\eta$ for sampling. This is in contrast to the unadjusted Langevin algorithm, which cannot find $\eta$ (or $L$) adaptively.

\section{Conclusion}\label{sec:conclusion}

This paper presents an algorithm based on the alternating sampling framework for sampling from non-smooth potentials and establishes a complexity bound $\tilde {\cal O}(d\varepsilon^{-1})$ to obtain $\varepsilon$ total variation distance to the target density.
Moreover, the algorithm is also applicable to sample from smooth potentials and has a complexity bound $\tilde {\cal O}(d\varepsilon^{-1})$.
The key contribution of this paper is a computationally efficient implementation of RGO for any convex (either smooth or non-smooth) function.
One direct extension of the paper is to apply the proposed algorithm to sample from semi-smooth densities, which include smooth and non-smooth densities as two extreme cases.
Another possible extension of our analysis in this paper is to consider sampling from composite densities proportional to $\exp(-f(x)-h(x))$ where $f$ is convex and smooth, and $h$ is convex and semi-smooth.

\section*{Acknowledgement}
This work was supported by NSF under grant 1942523 and 2008513.

\bibliographystyle{plain}
\bibliography{ref}

\appendix

\section{Technical results}

\begin{lemma}\label{lem:Gaussian}
	Useful Gaussian integrals:
	\begin{itemize}
		\item[a)] for any $\lam>0$,
		\[
		\int_{\R^d} \exp\left(-\frac{1}{2\lam}\|x\|^2\right) dx = (2\pi \lam)^{d/2};
		\]
		\item[b)] for any $c>0$ and $n\ge 1$,
		\begin{equation*}
			\int_0^\infty \exp(-cx^2) x^n dx = \left\{\begin{array}{ll}
				\frac{(n-1)!!}{2^{n/2+1} c^{n/2}} \sqrt{\frac{\pi}{c}}, & \text { for } n \text{ even}, \\[0.15cm]
				\frac{\left(\frac{n-1}{2}\right)!}{2 c^{(n+1)/2}}, & \text { for } n \text{ odd}.
			\end{array}\right.
		\end{equation*}
	\end{itemize}
\end{lemma}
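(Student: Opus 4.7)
The plan is to prove the two parts independently, both reducing to the classical one-dimensional Gaussian identity and the gamma function. For part (a), I would first invoke Fubini's theorem to factor the integral over $\R^d$ as a product of one-dimensional integrals, since $\exp(-\|x\|^2/(2\lambda)) = \prod_{i=1}^d \exp(-x_i^2/(2\lambda))$. This reduces the task to showing $\int_{\R} \exp(-t^2/(2\lambda))\,dt = \sqrt{2\pi\lambda}$, after which raising to the $d$-th power yields $(2\pi\lambda)^{d/2}$.

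To establish the one-dimensional identity, I would use the standard trick of squaring: setting $I=\int_{\R}\exp(-t^2/(2\lambda))\,dt$, compute
\[
I^2 = \int_{\R^2} \exp\!\left(-\frac{s^2+t^2}{2\lambda}\right) ds\,dt
\]
and convert to polar coordinates $(r,\theta)$, which turns the integrand into $r\exp(-r^2/(2\lambda))$ and admits an elementary antiderivative. This yields $I^2 = 2\pi\lambda$, hence $I=\sqrt{2\pi\lambda}$.

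For part (b), the natural move is the substitution $u = cx^2$, so $du = 2cx\,dx$ and $x = \sqrt{u/c}$, which transforms the integral into a gamma function:
\[
\int_0^\infty e^{-cx^2} x^n\,dx = \frac{1}{2c^{(n+1)/2}} \int_0^\infty e^{-u} u^{(n-1)/2}\,du = \frac{\Gamma((n+1)/2)}{2c^{(n+1)/2}}.
\]
The final step is to simplify $\Gamma((n+1)/2)$ according to the parity of $n$. When $n$ is odd, $(n+1)/2$ is a positive integer, and $\Gamma((n+1)/2) = ((n-1)/2)!$ gives the second branch directly. When $n$ is even, $(n+1)/2$ is a half-integer, so applying the recursion $\Gamma(s+1)=s\Gamma(s)$ repeatedly down to $\Gamma(1/2)=\sqrt{\pi}$ yields $\Gamma((n+1)/2) = \frac{(n-1)!!}{2^{n/2}}\sqrt{\pi}$, which after substitution produces the first branch with the $\sqrt{\pi/c}$ factor.

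There is no real obstacle here since these are textbook identities; the only bookkeeping requiring care is the even-case double-factorial formula for $\Gamma(k+1/2)$, which I would verify by a quick induction on $k$ using $\Gamma(s+1)=s\Gamma(s)$ and the base case $\Gamma(1/2)=\sqrt{\pi}$ (itself a consequence of the one-dimensional Gaussian integral computed in part (a)). Thus the two parts are in fact linked: the polar-coordinates computation underlying (a) feeds the half-integer gamma values needed for the even branch of (b).
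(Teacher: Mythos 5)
Your proof is correct: the paper states these identities without proof (treating them as standard facts), and your derivation — Fubini plus the polar-coordinate trick for part (a), and the substitution $u=cx^2$ reducing part (b) to $\Gamma((n+1)/2)/(2c^{(n+1)/2})$ with the parity split handled via $\Gamma(k+\tfrac12)=\frac{(2k-1)!!}{2^k}\sqrt{\pi}$ — is exactly the standard argument one would supply, with no circularity since $\Gamma(1/2)=\sqrt{\pi}$ rests on the independently proven part (a).
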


\begin{lemma}\label{lem:Gamma}
	Facts about the Gamma function $\Gamma$:
	for every $k\ge 1$,
	\begin{equation}\label{eq:Gamma}
		\Gamma(k)=(k-1)!, \quad \Gamma\left(k+\frac12\right) = \left(k-\frac12\right)\left(k-\frac32\right) \cdots \frac12 \sqrt{\pi},
	\end{equation}
	and
	\begin{equation}\label{ineq:double}
		\sqrt{k}< \frac{\Gamma(k+1)}{\Gamma(k+\frac12)} < \sqrt{k+\frac12}.
	\end{equation}
\end{lemma}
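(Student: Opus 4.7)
The plan is to dispatch the two identities in \eqref{eq:Gamma} by routine induction based on the functional equation $\Gamma(z+1)=z\Gamma(z)$, and to prove the double inequality \eqref{ineq:double} by invoking log-convexity of $\Gamma$ (which in turn follows from the Bohr--Mollerup theorem or directly from the Cauchy--Schwarz inequality applied to the integral $\int_0^\infty t^{z-1}e^{-t}\,dt$).

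First I would handle \eqref{eq:Gamma}. For the integer case, note that $\Gamma(1)=\int_0^\infty e^{-t}\,dt=1=0!$, and if $\Gamma(k)=(k-1)!$ then $\Gamma(k+1)=k\Gamma(k)=k\cdot (k-1)!=k!$, which gives the induction step. For the half-integer case, recall $\Gamma(1/2)=\sqrt{\pi}$ (a standard Gaussian integral), and then iterating $\Gamma(z+1)=z\Gamma(z)$ starting from $z=1/2$ yields $\Gamma(k+1/2)=(k-1/2)(k-3/2)\cdots(1/2)\sqrt{\pi}$ by straightforward induction on $k$.

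For \eqref{ineq:double} I would use strict log-convexity of $\Gamma$ on $(0,\infty)$, i.e., the inequality
\[
\Gamma\bigl((1-t)a+tb\bigr)^{\,} < \Gamma(a)^{1-t}\Gamma(b)^{t}\qquad \text{for } 0<t<1,\ a\neq b.
\]
For the upper bound, apply this with $a=k+\tfrac12$, $b=k+\tfrac32$, $t=\tfrac12$, so that $(1-t)a+tb=k+1$; squaring gives
\[
\Gamma(k+1)^2 < \Gamma(k+\tfrac12)\,\Gamma(k+\tfrac32) = (k+\tfrac12)\,\Gamma(k+\tfrac12)^2,
\]
which upon dividing by $\Gamma(k+\tfrac12)^2$ and taking square roots yields $\Gamma(k+1)/\Gamma(k+\tfrac12)<\sqrt{k+\tfrac12}$. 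For the lower bound, apply the same inequality with $a=k$, $b=k+1$, $t=\tfrac12$, so that $(1-t)a+tb=k+\tfrac12$; this gives
\[
\Gamma(k+\tfrac12)^2 < \Gamma(k)\,\Gamma(k+1) = \tfrac{1}{k}\,\Gamma(k+1)^2,
\]
whence $\Gamma(k+1)/\Gamma(k+\tfrac12)>\sqrt{k}$.

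There is no genuine obstacle here; the only subtle point is choosing to invoke log-convexity of $\Gamma$ (rather than attempting a direct induction on the recursion $R(k+1)=R(k)\cdot(k+1)/(k+\tfrac12)$, which turns out not to propagate the bounds cleanly). If one prefers to avoid quoting log-convexity as a black box, it can be justified in one line by Cauchy--Schwarz on $\int_0^\infty t^{((a+b)/2)-1}e^{-t}\,dt = \int_0^\infty (t^{(a-1)/2}e^{-t/2})(t^{(b-1)/2}e^{-t/2})\,dt$, giving the required strict inequality between $\Gamma((a+b)/2)^2$ and $\Gamma(a)\Gamma(b)$.
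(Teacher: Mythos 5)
Your proof is correct, but it follows a genuinely different route from the paper. The paper, like you, takes the identities in \eqref{eq:Gamma} as standard (it omits their proofs entirely, so your induction via $\Gamma(z+1)=z\Gamma(z)$ is if anything more complete). For the double inequality \eqref{ineq:double}, however, the paper gives a Wallis-integral argument: it sets $I_k=\int_0^{\pi/2}\sin^k x\,dx$, uses the recursion $I_k=\frac{k-1}{k}I_{k-2}$ and the monotonicity $I_{k+1}<I_k$ to trap the ratio $I_{2k}/I_{2k+1}$ between $1$ and $\frac{2k+1}{2k}$, and converts this into the two-sided bound $\frac{1}{\sqrt{k+\frac12}}<\frac{1}{4^k}\binom{2k}{k}\sqrt{\pi}<\frac{1}{\sqrt{k}}$ on the central binomial coefficient, which equals $\Gamma(k+\frac12)/\Gamma(k+1)$. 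You instead derive both bounds from strict midpoint log-convexity of $\Gamma$, applied at the pairs $(k+\frac12,k+\frac32)$ and $(k,k+1)$, and your one-line Cauchy--Schwarz justification
\begin{equation*}
\Gamma\Bigl(\tfrac{a+b}{2}\Bigr)=\int_0^\infty \bigl(t^{\frac{a-1}{2}}e^{-t/2}\bigr)\bigl(t^{\frac{b-1}{2}}e^{-t/2}\bigr)\,dt<\sqrt{\Gamma(a)\Gamma(b)}\quad(a\ne b)
\end{equation*}
is exactly what is needed (equality in Cauchy--Schwarz would force $a=b$), so the argument is self-contained and strictness is preserved. Your approach is shorter and makes the reason for the $\sqrt{k}$ and $\sqrt{k+\frac12}$ endpoints transparent (they are the factors $\Gamma(k+1)/\Gamma(k)$ and $\Gamma(k+\frac32)/\Gamma(k+\frac12)$); the paper's approach stays entirely within elementary calculus of sine integrals and factorials, at the cost of a longer computation. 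Your side remark is also accurate: a naive induction on $R(k)=\Gamma(k+1)/\Gamma(k+\frac12)$ via $R(k+1)=R(k)\frac{k+1}{k+\frac12}$ does not propagate the bounds, since $\sqrt{k}\,\frac{k+1}{k+\frac12}<\sqrt{k+1}$.
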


\begin{proof}
	Identities in \eqref{eq:Gamma} are well-known and hence their proofs are omitted.
	We give an elementary proof of \eqref{ineq:double}.
	Let
	\[
	I_k= \int_0^{\pi/2} \sin^k x  dx, \quad \forall k\ge 0,
	\]
	then integration by parts gives the recursive formula
	\begin{equation}\label{eq:Ik}
		I_k = \frac{k-1}{k} I_{k-2}, \quad \forall k \ge 2.
	\end{equation}
	Applying the above identity recursively and using the facts that $I_0=\pi/2$ and $I_1=1$, we have for every $k\ge 1$,
	\[
	I_{2k}= \frac{2k-1}{2k} \frac{2k-3}{2k-2} \cdots \frac{1}{2} \frac{\pi}{2}, \quad I_{2k+1}= \frac{2k}{2k+1} \frac{2k-2}{2k-1} \cdots \frac{2}{3},
	\]
	and hence,
	\begin{equation}\label{eq:ratio}
		\frac{I_{2k}}{I_{2k+1}} = \frac{2k+1}{2} \frac{((2k)!)^2}{4^{2k} (k!)^4} \pi.
	\end{equation}
	It follows from the fact that $\sin x < 1$ for $x\in (0,\pi/2)$ that $I_{k+1}< I_k$ for every $k\ge 0$.
	This observation together with \eqref{eq:Ik} implies that
	\[
	1< \frac{I_{2k}}{I_{2k+1}} < \frac{I_{2k-1}}{I_{2k+1}} = \frac{2k+1}{2k}.
	\]
	Using the above inequality and \eqref{eq:ratio}, we have
	\begin{equation}\label{ineq:I}
		\frac{1}{\sqrt{k+\frac12}}< \frac{(2k)!}{4^k (k!)^2} \sqrt{\pi} = \frac{1}{4^k} { 2k \choose k } \sqrt{\pi} < \frac{1}{\sqrt{k}}.
	\end{equation}
	Finally, it follows from \eqref{eq:Gamma} that
	\[
	\frac{\Gamma\left(k+\frac12\right)}{\Gamma(k+1)} = \frac{1}{4^k} { 2k \choose k } \sqrt{\pi},
	\]
	which together with \eqref{ineq:I} implies \eqref{ineq:double}.
\end{proof}

\begin{proposition}\label{lem:key}
	For $a\ge0$ and $d\ge 1$, if $\lam \le 1/(16 a^2 d)$, then
	\begin{equation}\label{ineq:int}
		\int_{\R^d} \exp\left(-\frac{1}{2\lam}\|x\|^2 - 2a\|x\|\right) d x
		\ge \frac{(2\pi\lam)^{d/2}}2.
	\end{equation}
\end{proposition}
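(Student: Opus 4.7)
The plan is to linearize the non-Gaussian factor via the tangent inequality $\exp(-u)\ge 1-u$, valid for every $u\in\R$. Since $\exp(-\|x\|^2/(2\lam))\ge 0$, multiplying this bound pointwise and integrating yields
\[
\int_{\R^d}\exp\!\left(-\frac{1}{2\lam}\|x\|^2-2a\|x\|\right)dx
\;\ge\;(2\pi\lam)^{d/2} \;-\; 2a\int_{\R^d}\|x\|\exp\!\left(-\frac{1}{2\lam}\|x\|^2\right)dx,
\]
using Lemma \ref{lem:Gaussian}(a) for the baseline term. The case $a=0$ is then immediate, so I will assume $a>0$. Next I would evaluate the remaining first absolute moment by passing to polar coordinates: the sphere $S^{d-1}$ contributes area $2\pi^{d/2}/\Gamma(d/2)$, and the radial integral $\int_0^\infty r^d e^{-r^2/(2\lam)}\,dr=\tfrac12\Gamma\bigl(\tfrac{d+1}{2}\bigr)(2\lam)^{(d+1)/2}$ is the Gamma-unified form of Lemma \ref{lem:Gaussian}(b) (direct for odd $d$, and obtained by the substitution $u=r^2/(2\lam)$ in general). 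Collecting factors gives
\[
\int_{\R^d}\|x\|\exp\!\left(-\frac{1}{2\lam}\|x\|^2\right)dx \;=\; (2\pi\lam)^{d/2}\,\sqrt{2\lam}\,\frac{\Gamma((d+1)/2)}{\Gamma(d/2)},
\]
so the claim will follow once I show $\Gamma((d+1)/2)/\Gamma(d/2)\le\sqrt{d/2}$.

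I would prove this last bound via Lemma \ref{lem:Gamma} with a parity split. For odd $d=2k+1$ with $k\ge 1$ the ratio equals $\Gamma(k+1)/\Gamma(k+\tfrac12)$, and \eqref{ineq:double} gives it $<\sqrt{k+\tfrac12}=\sqrt{d/2}$ at once. For even $d=2k$ with $k\ge 1$, I would rewrite $\Gamma(k+\tfrac12)/\Gamma(k)=k\cdot\Gamma(k+\tfrac12)/\Gamma(k+1)$ (using $\Gamma(k+1)=k\Gamma(k)$) and invoke the left half of \eqref{ineq:double} to obtain $k/\sqrt{k}=\sqrt{k}=\sqrt{d/2}$. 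The degenerate dimension $d=1$, which falls outside the regime $k\ge 1$ of Lemma \ref{lem:Gamma}, is dispatched by the one-line computation $\Gamma(1)/\Gamma(\tfrac12)=1/\sqrt{\pi}<1/\sqrt{2}$.

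Combining the estimates gives
\[
\int_{\R^d}\exp\!\left(-\frac{1}{2\lam}\|x\|^2-2a\|x\|\right)dx \;\ge\; (2\pi\lam)^{d/2}\bigl(1-2a\sqrt{\lam d}\bigr),
\]
and the hypothesis $\lam\le 1/(16a^2 d)$ forces $2a\sqrt{\lam d}\le 1/2$, which delivers the claimed bound. I expect the main obstacle to be the Gamma-ratio estimate: the tangent inequality and the polar-coordinate reduction are routine, but Lemma \ref{lem:Gamma} is stated only for integer $k\ge 1$, whereas the ratio features half-integer shifts of $d/2$, so the even/odd parity split together with the separate treatment of $d=1$ is unavoidable if one wants a proof that stays strictly within the tools already introduced in the appendix.
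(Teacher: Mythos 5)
Your proof is correct and follows essentially the same route as the paper's: both arguments lower-bound the integral by the Gaussian mass minus $2a$ times the Gaussian first absolute moment (you via the tangent inequality $e^{-u}\ge 1-u$, the paper via differentiating $F_{d,\lam}(a)$ in $a$, which yields the identical linear-in-$a$ bound), and both control the resulting moment ratio by $\sqrt{\lam d}$ through the Gamma-ratio inequality \eqref{ineq:double}, finishing with $2a\sqrt{\lam d}\le 1/2$. The only differences are organizational: you compute the radial moment with a single Gamma-function formula and confine the parity split (plus the separate $d=1$ check) to the ratio estimate $\Gamma((d+1)/2)/\Gamma(d/2)\le\sqrt{d/2}$, whereas the paper carries the even/odd case distinction through explicit double-factorial moment computations.
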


\begin{proof}
	Let $r=\|x\|$ and note that
	\[
	dx= r^{d-1} d S^{d-1}dr
	\]
	where $ S^{d-1} $ is the surface area of the $ (d-1) $-dimensional unit sphere.
	% \[
	% \int d S^{d-1} = \frac{2 \pi^{d/2}}{\Gamma\left( \frac d2 \right) }
	% \]
	Define
	\begin{equation}\label{def:F}
		F_{d,\lam}(a):= \int_0^\infty \exp\left( -\frac{1}{2\lam}r^2 - 2a r \right) r^{d} d r,
	\end{equation}
	then we have
	\begin{align}
		\int \exp\left(-\frac{1}{2\lam}\|x\|^2 - 2a\|x\|\right) d x &= \int \exp\left(-\frac{1}{2\lam}r^2 - 2a r\right) r^{d-1} d r d S^{d-1} \nn \\
		&= \frac{2 \pi^{d/2}}{\Gamma\left( \frac d2 \right) } F_{d-1,\lam}(a).  \label{eq:int}
	\end{align}
	In the above, we have used the fact that the total surface area of a $ (d-1) $-dimensional unit sphere is $2 \pi^{d/2}/\Gamma\left( \frac d2 \right)$.
	It follows from the definition of $F_{d,\lam}$ in \eqref{def:F} that
	\[
	\frac{d F_{d-1,\lam}(a)}{d a}=\int_0^\infty \exp\left( -\frac{1}{2\lam}r^2 - 2 a r \right) (-2r) r^{d-1} d r 
	=-2 F_{d,\lam}(a) \ge -2F_{d,\lam}(0),
	\]
	and hence that
	\begin{equation}\label{ineq:F}
		F_{d-1,\lam}(a) \ge F_{d-1,\lam}(0) -2 a F_{d,\lam}(0).
	\end{equation}
	We now consider two cases: $d=2k+1$ for $k\ge 0$ and $d=2k$ for $k\ge 1$.
	
	\noindent
	{\bf Case 1}: $d=2k+1$. It follows from the definition of $F_{d,\lam}$ in \eqref{def:F} and Lemma \ref{lem:Gaussian}(b) that
	\begin{align}
		&F_{2k,\lam}(0)=\int_0^\infty \exp\left( -\frac{1}{2\lam}r^2 \right) r^{2k} d r
		% = \frac{(2k-1)!!(2\lam)^k}{2^{k+1}}\sqrt{2\lam \pi} 
		= \frac{(2k-1)!!\lam^k}{2}\sqrt{2\lam \pi}, \label{eq:2k} \\
		& F_{2k+1,\lam}(0)=\int_0^\infty \exp\left( -\frac{1}{2\lam}r^2 \right) r^{2k+1} d r
		= \frac{k!(2\lam)^{k+1}}{2}. \nn
	\end{align}
	Using the above two identities and \eqref{ineq:F} with $d=2k+1$, we have
	\begin{align*}
		F_{2k,\lam}(a) &\ge F_{2k,\lam}(0) -2 a F_{2k+1,\lam}(0)\\
		&=\frac{(2k-1)!!\lam^k}{2}\sqrt{2\lam \pi} - a k!(2\lam)^{k+1} \\
		&= \lam^k (2k-1)!! \left( \sqrt{\frac{\lam \pi}2} - 2a \lam \frac{(2k)!!}{(2k-1)!!} \right).
	\end{align*}
	The above inequality, the fact that
	\[
	\frac{(2k)!!}{(2k-1)!!} = \frac{\Pi_{i=1}^k 2i}{\Pi_{i=1}^k (2i-1)} = \frac{\Pi_{i=1}^k i}{\Pi_{i=1}^k (i-\frac12)} = \frac{\Gamma(k+1)}{\Gamma(k+\frac12)} \sqrt{\pi}
	\]
	and \eqref{ineq:double} imply that
	\begin{align*}
		F_{2k,\lam}(a) &\ge \lam^k (2k-1)!! \left( \sqrt{\frac{\lam \pi}2} - 2a \lam \sqrt{\left(k+\frac12\right) \pi} \right) \\
		&= \lam^k (2k-1)!! \left( \sqrt{\frac{\lam \pi}2} - a \lam \sqrt{2 d \pi} \right) \\
		&\ge \lam^k (2k-1)!!\frac12 \sqrt{\frac{\lam \pi}2}\\
		&= (2\lam)^k \Gamma\left(k+\frac12\right) \frac{\sqrt{\lam}}{2\sqrt{2}}
		= \frac{(2\lam)^{d/2} \Gamma(\frac d2) }4 
	\end{align*}
	where the second inequality is due to the assumption that $\lam \le 1/(16 a^2 d)$, and the first identity is due to the fact that $d=2k+1$.
	Using the above inequality, \eqref{eq:int} and the fact that $d=2k+1$, we conclude that \eqref{ineq:int} holds for case 1.
	
	\noindent
	{\bf Case 2}: $d=2k$. 
	It follows from the definition of $F_{d,\lam}$ in \eqref{def:F} and Lemma \ref{lem:Gaussian}(b) that
	\[
	F_{2k-1,\lam}(0)=\int_0^\infty \exp\left( -\frac{1}{2\lam}r^2 \right) r^{2k-1} d r
	= \frac{(k-1)!(2\lam)^k}{2} = (2k-2)!!\lam^k.
	\]
	Using the above identity, \eqref{eq:2k}, and \eqref{ineq:F} with $d=2k$, we have
	\begin{align*}
		F_{2k-1,\lam}(a) &\ge F_{2k-1,\lam}(0) -2 a F_{2k,\lam}(0)\\
		% &=(2k-2)!!\lam^k - a (2k-1)!!\lam^k \sqrt{2\lam \pi} \\
		&= (2k-2)!!\lam^k \left( 1 - a \sqrt{2\lam \pi} \frac{(2k-1)!!}{(2k-2)!!} \right) \\
		&= (2k-2)!!\lam^k \left( 1 - 2 a k \sqrt{2\lam} \frac{\Gamma(k+\frac12)}{\Gamma(k+1)} \right) \\
		&\ge (2k-2)!!\lam^k \left( 1 - 2a \sqrt{2\lam k} \right) 
	\end{align*}
	where the second identity is due to \eqref{eq:Gamma}, and the second inequality is due to \eqref{ineq:double}.
	It follows from the above inequality, the fact that $d=2k$ and the assumption that $\lam \le 1/(16 a^2 d)$ that
	\begin{align*}
		F_{d-1,\lam}(a) & \ge (2k-2)!!\lam^k \left( 1 - 2a \sqrt{\lam d} \right) 
		\ge \frac{(2k-2)!!\lam^k}{2}\\
		&= \frac{(k-1)!!(2\lam)^k}{4} = \frac{\Gamma(k)(2\lam)^k}{4} = \frac{\Gamma(\frac d2)(2\lam)^{d/2}}{4}
	\end{align*}
	where the second identity is due to \eqref{eq:Gamma}.
	Using the above inequality, \eqref{eq:int} and the fact that $d=2k$, we conclude that \eqref{ineq:int} holds for case 2.
\end{proof}

\section{Missing proofs}\label{sec:proofs}

\noindent
{\bf Proof of Proposition \ref{prop:rejection}:}
It is a well-known result for rejection sampling that $\tilde X \sim p(x)$ and the probability that $\tilde X$ is accepted is
\[
\mathbb{P}\left(U \leq \frac{\exp(-g^\eta(X))}{\exp(-h_1(X))}\right) 
=\frac{\int \exp(-g^\eta(x)) d x}{\int \exp(-h_1(x)) d x}.
\]
Using the assumption that $\eta_\mu \le 1/(16 M^2 d)$, Lemma \ref{lem:h1h2} and Proposition \ref{lem:key} that
\[
\int_{\R^d} \exp(-g^\eta(x)) d x \ge
\int_{\R^d} \exp(-h_2(x)) d x \ge \exp(-g^\eta(x^*)) \frac{(2\pi\eta_\mu)^{d/2}}2.
\]
Moreover, it follows from the definition of $h_1$ and Lemma \ref{lem:Gaussian}(a) that
\[
\int_{\R^d} \exp(-h_1(x)) d x = \exp(-g^\eta(x^*)) (2\pi\eta_\mu)^{d/2}.
\]
The above three relations immediately imply that 
\[
\mathbb{P}\left(U \leq \frac{\exp(-g^\eta(X))}{\exp(-h_1(X))}\right) \ge \frac12,
\]
and hence that the expected number of iterations is bounded above by 2. 
\QEDA

\vgap

\noindent
{\bf Proof of Lemma \ref{lem:bundle}:}
a) The first two inequalities directly follow from the convexity of $f$, and the definitions of $g$ and $g_j$. The third inequality follows from \eqref{def:xj}.

b) This statement immediately follows from step 3 of Algorithm \ref{alg:PBS}.

c) It follows from the optimality condition of \eqref{def:xj} and the definition of $g_j$ that
\[
-\mu(x_j-x^0) - \frac{x_j-y}{\eta} \in \partial f_j(x_j).
\]
This inclusion and the fact that $\|f'(x)\|\le M$ imply that c) holds.

d) %The proof can be found in Lemma 5.1(e) of \cite{liang2020proximal}. 
The last inequality in (a) with $x=\tx_j$ and (b) imply this statement.
\QEDA

\vgap

\noindent
{\bf Proof of Proposition \ref{prop:expected}:}
We first observe that the assumption \eqref{ineq:assumption} implies that
\[
2\sqrt{\eta_\mu} M + \sqrt{2\delta} \le \frac{1}{2\sqrt{d}},
\]
which satisfies the assumption in Proposition \ref{lem:key} with
\[
\lam = \eta_\mu, \quad a = M + \frac{\sqrt{\delta}}{\sqrt{2\eta_\mu}}.
\]
Using the definition of $h_2$ in \eqref{def:h} and Lemma \ref{lem:key},
we have
\[
\int_{\R^d} \exp(-h_2(x)) d x \ge \frac12 \exp(-g^\eta(\tx_j)) (2\pi \eta_\mu)^{d/2}.
\]
It follows from the definition of $h_1$ in \eqref{def:tf} and Lemma \ref{lem:Gaussian}(a) with $\lam=\eta_\mu$ that
\[
\int \exp(-h_1(x)) dx 
= \exp\left( -g^\eta(\tx_j) + \delta \right) (2\pi \eta_\mu)^{d/2}.
\]
We conclude that
\begin{align*}
	\mathbb{P}\left(U \leq \frac{\exp(-g^\eta(X))}{\exp(-h_1(X))}\right) &=\frac{\int \exp(-g^\eta(x)) d x}{\int \exp(-h_1(x)) d x} \\
	&\ge \frac{\int \exp(-h_2(x)) d x}{\exp\left( -g^\eta(\tx_j)+\delta \right) (2\pi \eta_\mu)^{d/2}} \ge \frac12 \exp(-\delta),
\end{align*}
and the expected number of the iterations is
\[
\frac{1}{\mathbb{P}\left(U \leq \frac{\exp(-g^\eta(X))}{\exp(-f(X))}\right)}
\le 2\exp(\delta) \le 2(1+2\delta) \le 2\left(1+\frac{1}{16d}\right) \le 3
\]
where the last two inequalities are due to the second inequality in \eqref{ineq:assumption}.
\QEDA

\vgap

\noindent
{\bf Proof of Theorem \ref{thm:g}:}
It follows from Theorem \ref{thm:outer} that the iteration-complexity for Algorithm \ref{alg:ASF} to obtain  $\varepsilon$ total tolerance to $\pi$ is \eqref{cmplx:outer}, which together with Proposition \ref{prop:bundle} implies that the total iteration-complexity is 
\begin{equation}\label{cmplx:total}
	\tilde {\cal O}\left( \left[ \frac{\eta_\mu M^2}\delta + 1\right] 
	\left[\frac{1}{\eta \mu} \log\left( \frac{d}{\eta \mu \varepsilon}\right) + 1 \right] \right).
\end{equation}
Let
\[
a = \frac{\eta_\mu M^2}{\delta}, \quad b = \frac{1}{\eta\mu}.
\]
In view of the above definitions of $a$ and $b$, the total iteration-complexity \eqref{cmplx:total} becomes $\tilde {\cal O}((a+1)(b+1))$.
It is easy to see from \eqref{ineq:eta} that $a \ge 1/2$ and $b \ge 1$, and hence $\tilde {\cal O}((a+1)(b+1))$ is equal to $\tilde {\cal O}(ab + 1)$. Since $\eta_\mu \le \eta$, the total iteration-complexity $\tilde {\cal O}(ab + 1)$ is \eqref{cmplx:mu}.
Moreover, it follows from \eqref{ineq:eta} that \eqref{ineq:assumption} is satisfied, and hence that Proposition \ref{prop:expected} holds. This conclusion together with the iteration-complexity \eqref{cmplx:outer} for Algorithm \ref{alg:ASF} implies the last conclusion of the theorem.
\QEDA

\vgap

\noindent
{\bf Proof of Theorem \ref{thm:f}:}
Let $\rho$ denote the distribution of the points generated by Algorithm \ref{alg:ASF} using Algorithm \ref{alg:RGO-bundle} as an RGO, and let $\hat \pi$ denote the distribution proportional to $\exp(-g(x))$
Following the proof of Corollary 4.1 of \cite{chatterji2020langevin}, we similarly have
\[
\|\rho- \pi\|_{\text{TV}} \le \|\rho-\hat \pi\|_{\text{TV}} + \|\hat \pi - \pi\|_{\text{TV}}
\]
and
\begin{align*}
	\|\hat \pi &- \pi\|_{\text{TV}}  \le \frac12 \left( \int_{\R^d} [f(x)-g(x)]^2 d \pi(x) \right)^{1/2} = \frac12 \left( \int_{\R^d} \left(\frac{\mu}{2}\|x-x^0\|^2 \right) ^2 d \pi(x) \right)^{1/2}\\
	&\le \frac\mu2 \left( \int_{\R^d} \left(\|x-x_\text{min}\|^2 + \|x_\text{min}-x^0\|^2 \right)^2 d \pi(x) \right)^{1/2} \\
	&\le \frac{\mu}2 \left( \int_{\R^d} \left(2\|x-x_\text{min}\|^4 + 2\|x_\text{min}-x^0\|^4 \right) d \pi(x) \right)^{1/2} \\
	&= \frac{\sqrt{2}\mu}2 \left( {\cal M}_4 + \|x_\text{min}-x^0\|^4 \right)^{1/2}
	\le \frac{\sqrt{2}\mu}{2} \left(\sqrt{{\cal M}_4} + \|x_0-x_\text{min}\|^2 \right) = \frac{\varepsilon}{2}
\end{align*}
where the last identity is due to the definition of $\mu$ in \eqref{eq:mu}.
Hence, it suffices to derive the iteration-complexity bound for Algorithm \ref{alg:ASF} to obtain $\|\rho-\hat \pi\|_{\text{TV}}\le \varepsilon/2$, which is \eqref{cmplx:mu1} in view of Theorem \ref{thm:g} with $\mu$ as in \eqref{eq:mu}.
\QEDA

\vgap

\noindent
{\bf Proof of Lemma \ref{lem:h1h2-smooth-2}:}
The first inequality $h_1(x) \le g^\eta(x)$ immediately follows Lemma \ref{lem:h1h2delta}.
We observe that the optimality condition of \eqref{eq:subproblem} is
\begin{equation}\label{observation}
	\mu(x^*-x^0) + \frac{1}{\eta} (x^*-y) = -\nabla f(x^*).
\end{equation}
It follows from the definition of $g^\eta$ in \eqref{def:x*} and the observation \eqref{observation} that
\begin{align}
	&g^\eta(x) - g^\eta(x^*) \nn \\
	= & f(x) - f(x^*) + \frac{\mu}{2}\|x-x^0\|^2 - \frac{\mu}{2}\|x^*-x^0\|^2 + \frac{1}{2\eta}\|x-y\|^2 - \frac{1}{2\eta}\|x^*-y\|^2 \nn \\
	= & f(x) - f(x^*) + \frac{\mu}{2}\|x-x^*\|^2 + \mu \inner{x-x^*}{x^*-x^0} + \frac{1}{2\eta}\|x-x^*\|^2 + \frac{1}{\eta}\inner{x-x^*}{x^*-y} \nn \\
	= & f(x) - f(x^*) + \inner{\mu(x^*-x^0) + \frac{1}{\eta} (x^*-y) }{x-x^*} + \frac{1}{2\eta_\mu}\|x-x^*\|^2 \nn \\
	= & f(x) - f(x^*) - \inner{\nabla f(x^*)}{x-x^*} + \frac{1}{2\eta_\mu}\|x-x^*\|^2.
	\label{eq:equal}
\end{align}
The above inequality and \eqref{ineq:smooth} with $(u,v)=(x,x^*)$ imply that
\[
g^\eta(x) - g^\eta(x^*) \le \frac{L}{2}\|x-x^*\|^{2} + \frac{1}{2\eta_\mu}\|x-x^*\|^2.
\]
Using the above inequality and the definitions of $\eta_{\mu,L}$ and $h_2$ in \eqref{def:eta-mu-L} and \eqref{def:h2}, we conclude that the second inequality $g^\eta(x)\le h_2(x)$ holds.
\QEDA

\vgap

\noindent
{\bf Proof of Proposition \ref{prop:rejection-smooth}:}
% We first notice that when $\alpha=1$, $h_2$ as in \eqref{def:h} has a special form
% \[
% h_2=\frac{1}{2\eta_{\mu,L_1}}\|\cdot-x^*\|^2 +  g^\eta(x^*)
% \]
% where $\eta_{\mu,L_1}=\eta_\mu/(1+\eta_\mu L_1)$.
Using the definition of $h_2$ in \eqref{def:h2} and Lemma \ref{lem:Gaussian} with $\lam=\eta_{\mu,L}$, we have
\begin{align*}
	\int_{\R^d} \exp(-h_2(x)) d x &= \exp(-g^\eta(x^*)) \int_{\R^d} \exp\left(-\frac{1}{2\eta_{\mu,L}}\|x-x^*\|^2\right) dx \\
	&= \exp(-g^\eta(x^*)) (2\pi \eta_{\mu,L})^{d/2}.
\end{align*}
The above identity, the fact that $g^\eta(\tx_j) \ge g^\eta(x^*)$, and the definitions of $\eta_\mu$ and $\eta_{\mu,L}$ imply that
\begin{align*}
	\mathbb{P}\left(U \leq \frac{\exp(-g^\eta(X))}{\exp(-h_1(X))}\right) 
	&\ge \frac{\int_{\R^d} \exp(-h_2(x)) d x}{\exp\left( -g^\eta(\tx_j)+\delta \right) (2\pi \eta_\mu)^{d/2}} \\
	& \ge \exp(g^\eta(\tx_j) - g^\eta(x^*) - \delta) \left( \frac{\eta_{\mu,L}}{\eta_\mu}\right)^{d/2}
	\ge \exp(-\delta) \left( \frac{1}{1+\eta_\mu L}\right)^{d/2}.
\end{align*}
The above inequality and the assumption that $\eta_\mu\le 1/(Ld)$ imply that the expected number of the iterations is
\[
\frac{1}{\mathbb{P}\left(U \leq \frac{\exp(-g^\eta(X))}{\exp(-h_1(X))}\right) }
\le \exp(\delta) (1+\eta_\mu L)^{d/2} \le \exp(\delta) \left(1+\frac{1}{d}\right)^{d/2} \le \exp(1/2+ \delta),
\]
where the last inequality is due to the fact that $(1+1/d)^d\le e$.
\QEDA

\vgap

\noindent
{\bf Proof of Theorem \ref{thm:g-smooth}:}
It follows from Theorem \ref{thm:outer} that the iteration-complexity (i.e., number of calls to RGO) for Algorithm \ref{alg:ASF} to obtain  $\varepsilon$ total tolerance to $\pi$ is \eqref{cmplx:outer}. Using \eqref{cmplx:outer} with $\eta\le 1/(Ld)$, we have the number of calls to RGO is \eqref{cmplx:mu-smooth}.
Moreover, it follows from Proposition \ref{prop:bundle-smooth} with $\eta\le 1/(Ld)$ that Algorithm \ref{alg:PBS} has iteration-complexity $\tilde {\cal O}(1)$.
As a consequence, the total  iteration-complexity bound for obtaining $\varepsilon$ total tolerance to $\pi$ in terms of total variation is \eqref{cmplx:mu-smooth}.
\QEDA

\end{document}